\colorlet{darkgreen}{green!50!black}
\colorlet{darkred}{red!90!black}
\colorlet{darkyellow}{yellow!90!black}
\tikzset{
  gnode/.style={
    fill=white,
    draw=black,
    circle,
    very thick, 
    inner sep=3.5,
    drop shadow={shadow xshift=0.3ex,shadow yshift=-0.5ex, path
      fading={circle with fuzzy edge 20 percent}}
  }
}
\tikzset{
  rnode/.style={
    fill=black,
    draw=black,
    circle,
    very thick, 
    inner sep=3.5,
    drop shadow={shadow xshift=0.3ex,shadow yshift=-0.5ex, path
      fading={circle with fuzzy edge 20 percent}}
  }
}
\tikzset{
  ynode/.style={
    fill=black!50!white,
    draw=black,
    circle,
    very thick, 
    inner sep=3.5,
    drop shadow={shadow xshift=0.3ex,shadow yshift=-0.5ex, path
      fading={circle with fuzzy edge 20 percent}}
  }
}
\begin{document}

\def\gridwidth{2}
  \def\nodenum{4}
  \def\asnwidth{0.5}
  \def\asnmargin{0.05}

\newcommand{\asnbox}[3]{
  \fill [#3!25!lightgray]
  (#1 * \asnwidth + \asnmargin, 
  \gridwidth * 3 - #2 * \gridwidth + \asnmargin)
  rectangle
  (#1 * \asnwidth + \asnwidth - \asnmargin, 
  \gridwidth * 3 - #2 * \gridwidth + \gridwidth - \asnmargin);
  \draw
  (#1 * \asnwidth + \asnmargin, 
  \gridwidth * 3 - #2 * \gridwidth + \asnmargin)
  rectangle
  (#1 * \asnwidth + \asnwidth - \asnmargin, 
  \gridwidth * 3 - #2 * \gridwidth + \gridwidth - \asnmargin);
}

  \newcommand{\dasnbox}[2]{
    \draw[dashed]
    (#1 * \asnwidth + \asnmargin, 
    \gridwidth * 3 - #2 * \gridwidth + \asnmargin)
    rectangle
    (#1 * \asnwidth + \asnwidth - \asnmargin, 
    \gridwidth * 3 - #2 * \gridwidth + \gridwidth - \asnmargin);
  }

  \newcommand{\dtpt}[2]{
    \draw (#1 * \asnwidth + 0.5 * \asnwidth, 
    #2 * \asnwidth + 0.5 * \asnwidth) 
    node {\tiny{x}};
  }

  \newcommand{\mvarr}[4]{
    \draw [->]
    (#1 * \asnwidth + 0.5 * \asnwidth, 
    3 * \gridwidth - #2 * \gridwidth + 0.5 * \gridwidth) 
    --
    (#3 * \asnwidth + 0.5 * \asnwidth, 
    3 * \gridwidth - #4 * \gridwidth + 0.5 * \gridwidth);
  }
  
  \newcommand{\boxlayout}{
    \draw (0,0) rectangle (\gridwidth * \nodenum, \gridwidth * \nodenum);

    \fill [red!10] (0, 3 * \gridwidth) 
    rectangle (\gridwidth * \nodenum, 4 * \gridwidth);

    \fill [green!10] (0, 2 * \gridwidth) 
    rectangle (\gridwidth * \nodenum, 3 * \gridwidth);

    \fill [blue!10] (0, 1 * \gridwidth) 
    rectangle (\gridwidth * \nodenum, 2 * \gridwidth);

    \fill [brown!10] (0,0) 
    rectangle (\gridwidth * \nodenum, 1 * \gridwidth);

    \foreach \y in {1,2,3}
    {
      \draw[densely dashed] 
      (-\gridwidth * 0.1, \gridwidth * \y)
      --
      (\gridwidth * \nodenum + \gridwidth * 0.1,  \gridwidth * \y);
    }

    \dtpt{5}{1}  \dtpt{0}{15}  \dtpt{13}{12}  \dtpt{13}{6}  \dtpt{14}{8}  \dtpt{0}{8}  \dtpt{7}{10}  \dtpt{15}{2}  \dtpt{10}{5}  \dtpt{8}{13}  \dtpt{5}{3}  \dtpt{12}{5}  \dtpt{10}{7}  \dtpt{11}{6}  \dtpt{15}{3}  \dtpt{6}{9}  \dtpt{13}{5}  \dtpt{1}{3}  \dtpt{8}{15}  \dtpt{1}{6}  \dtpt{1}{7}  \dtpt{8}{9}  \dtpt{12}{4}  \dtpt{12}{1}  \dtpt{13}{10}  \dtpt{9}{0}  \dtpt{8}{4}  \dtpt{15}{12}  \dtpt{3}{6}  \dtpt{10}{13} 
  }

  \newcommand{\ptrna}{crosshatch}
  \newcommand{\ptrnb}{north west lines}
  \newcommand{\ptrnc}{north east lines}
  \newcommand{\ptrnd}{horizontal lines}

  \newcommand{\ptcla}{red}
  \newcommand{\ptclb}{green}
  \newcommand{\ptclc}{blue}
  \newcommand{\ptcld}{brown}

  \newcommand{\drawrowpart}{
    \draw[pattern=\ptrnd, pattern color=\ptcld] 
    (-\gridwidth - \gridwidth * 0.25, 0 * \gridwidth) 
    rectangle 
    (- \gridwidth * 0.25, 
    0 * \gridwidth + 1 * \gridwidth);
    \draw[pattern=\ptrnc, pattern color=\ptclc] 
    (-\gridwidth - \gridwidth * 0.25, 
    1 * \gridwidth) 
    rectangle 
    (- \gridwidth * 0.25, 
    1 * \gridwidth + 1 * \gridwidth);
    
    \draw[pattern=\ptrnb, pattern color=\ptclb] 
    (-\gridwidth - \gridwidth * 0.25, 
    2 * \gridwidth) 
    rectangle 
    (- \gridwidth * 0.25, 
    2 * \gridwidth + 1 * \gridwidth);
    
    \draw[pattern=\ptrna, pattern color=\ptcla] 
    (-\gridwidth - \gridwidth * 0.25, 
    3 * \gridwidth) 
    rectangle 
    (- \gridwidth * 0.25, 
    3 * \gridwidth + 1 * \gridwidth);
  }
  
  \newcommand{\drawcolpart}[4]{
    \draw[pattern=#3, pattern color=#4] 
    (#1 * \asnwidth, 
    4 * \gridwidth + 0.25 * \gridwidth) 
    rectangle 
    (#2 * \asnwidth + \asnwidth, 
    4 * \gridwidth + 0.25 * \gridwidth + \gridwidth);
  }


\title{DS-MLR: Exploiting Double Separability for Scaling up
  Distributed Multinomial Logistic Regression}
\author{
Parameswaran Raman\\
       {University of California, Santa Cruz}\\
       {params@ucsc.edu}
\and
Sriram Srinivasan\\
        {University of California, Santa Cruz}\\
        {ssriniv9@ucsc.edu}
\and
Shin Matsushima\\
        {University of Tokyo, Japan}\\
        {shin\_matsushima@mist.i.u-tokyo.ac.jp}
\and
Xinhua Zhang\\
        {University of Illinios, Chicago}\\
        {zhangx@uic.edu}        
\and
Hyokun Yun\\
	{Amazon}\\
	{yunhyoku@amazon.com}        
\and
S.V.N Vishwanathan\\
        {University of California, Santa Cruz}\\
        {vishy@ucsc.edu}
}

\maketitle

\begin{abstract}
Scaling multinomial logistic regression to datasets with very large number of data points and classes is challenging. This is primarily because one needs to compute the log-partition function on every data point. This makes distributing the computation hard. In this paper, we present a distributed stochastic gradient descent based optimization method (DS-MLR) for scaling up multinomial logistic regression problems to massive scale datasets without hitting any storage constraints on the data and model parameters. Our algorithm exploits double-separability, an attractive property that allows us to achieve both data as well as model parallelism simultaneously. In addition, we introduce a non-blocking and asynchronous variant of our algorithm that avoids bulk-synchronization. We demonstrate the versatility of DS-MLR to various scenarios in data and model parallelism, through an extensive empirical study using several real-world datasets. In particular, we demonstrate the scalability of DS-MLR by solving an extreme multi-class classification problem on the Reddit dataset (159 GB data, 358 GB parameters) where, to the best of our knowledge, no other existing methods apply.
\end{abstract}


\section{Introduction}
\label{sec:Introduction}

In this paper, we focus on multinomial logistic regression (MLR) on
massive datasets, in the presence of large number of data points and large number of classes. 
MLR is a method of choice for several real-world tasks such as Image Classification \citep{RusDenSu15} and Video Recommendation \citep{DavLieLiu10}. Therefore, it has received significant research attention \citep{GopYan13}, \citep{YenHuaRavZhoDhi16}. The classic paradigm in distributed machine learning is to perform {\it data partitioning}, using, for instance, a map reduce style architecture where data is distributed across multiple slaves. In each iteration, the slaves gather the parameter vector from the master, compute gradients locally and transmit them back to the master. The L-BFGS optimization algorithm
is typically used in the master to update the parameters after every iteration
\citep{NocWri06}. The main drawback of this strategy is that the model
parameters need to be replicated on every machine. For a $D$ dimensional problem involving $K$ classes, 
this demands $O(K \times D)$ storage. In many cases, this is too large to fit on a single machine.

An orthogonal approach is to use \emph{model partitioning}. Here again, a master slave architecture is used, but now, the data is replicated across each slave. The model parameters are partitioned and
distributed to each machine. During each iteration, the model parameters
on the individual machines are updated, and some auxiliary variables are
computed and distributed to the other slaves, which use these variables
in their parameter updates. See the Log-Concavity (LC) method \cite{GopYan13} for an example of such
a strategy. The main drawback of this approach, however, is that the
data needs to be replicated on each machine, and consequently it is not applicable when the data is too 
large to fit on a single machine.

\begin{wraptable}[13]{r}{.5\linewidth}
\scalebox{0.9}{
    \begin{tabular}{cc|c|c|}
      & \multicolumn{1}{c}{} & \multicolumn{2}{c}{{\it Parameters}}\\
      & \multicolumn{1}{c}{} & \multicolumn{1}{c}{Fit }  & \multicolumn{1}{c}{Do not Fit} \\\cline{3-4}
      \multirow{2}*{{\it Data}}  & Fit & \pbox{7cm}{L-BFGS, LC, \\ {\bf DS-MLR} \\} & LC, {\bf DS-MLR} \\\cline{3-4}
      & Do not Fit & \pbox{7cm}{L-BFGS, \\ {\bf DS-MLR} \\} & {\bf DS-MLR} \\\cline{3-4}
    \end{tabular}}
\caption{Applicability of various methods under different regimes in distributed machine learning. {\bf DS-MLR} is our proposed method. LC is by \cite{GopYan13}.} \label{table:data_model_parallel_compact}
\end{wraptable} 

In contrast to the above approaches, we propose a reformulation of the
objective function of multinomial logistic regression that allows us to
\emph{simultaneously} perform both \emph{data and model partitioning}, thus enabling extreme classification at a massive scale with large number
of data points and classes. We believe this is critical because, the growing acclaim of machine learning is witnessing several novel prediction tasks which not only involve humongous amounts of data, but also strive towards building more sophisticated models, demanding larger storage footprints. Table \ref{table:data_model_parallel_compact} presents a categorization of 
the various methods we discussed. DS-MLR can be applied in all the four scenarios. In Table \ref{table:data_model_parallel_space}, we compare their storage requirements in more detail. DS-MLR occupies the {\it least amount of storage per worker}. The main contributions of this work are:
\begin{enumerate}
	\item We develop DS-MLR, a novel distributed stochastic optimization algorithm that can partition both data as well as model parameters {\it simultaneously} across its workers.
	\item We develop a {\it non-blocking} and {\it asynchronous} variant (DS-MLR Async), which provides further speedups in the multi-core, multi-machine setting by interleaving the computation and communication phases during every iteration.
	\item We present an exhaustive empirical study spanning all the regimes of data and model parallelism, showing that DS-MLR readily applies in all cases. In particular, to demonstrate applicability in the scenario where {\it both data and model do not fit} on a single machine, we run DS-MLR on a large Reddit dataset with data and model parameters occupying {\it 200 GB and 300 GB} respectively.
\end{enumerate}

\begin{table}[h]
\vspace{-1em}
\center
  \renewcommand{\arraystretch}{1.3}
  \scalebox{0.9}{
    \begin{tabular}{l | l c|c}\cline{1-4}
      & \multicolumn{2}{c|}{Storage per worker} & Communication \\ \cline{2-3} 
       &   Data & Parameters \\
      \hline
      L-BFGS &  $O(\frac{N D}{P})$ & $O(K D)$ & $O(K D)$ \\
      LC &  $O(N D)$ & $O(\frac{K D}{P}) + O(N)$ & $O(N)$ \\
      {\bf DS-MLR} &  $O(\frac{N D}{P})$ & $O(\frac{K D}{P}) + O(\frac{N}{P})$ & $O(\frac{K D}{P})$ \\
    \end{tabular}}
   \caption{Memory requirements of various algorithms when applied to multinomial logistic regression ($N: $ \# of data points, $D: $ \# of features, $K: $ \# of classes, $P: $ \# of workers).}
  \label{table:data_model_parallel_space}    
\end{table}


The rest of the paper is organized as follows: Section \ref{sec:RelatedWork} discusses related work. Section \ref{sec:MultLogistRegr} formally introduces
Multinomial Logistic Regression (MLR). Section \ref{sec:Reformulation} presents our reformulation (DS-MLR). In section \ref{sec:dsmlr_distributed},
we discuss how our doubly separable objective function can be optimized in a 
distributed fashion and present synchronous and asynchronous algorithms for it. In section \ref{sec:convergence}, we present rates of convergence for the synchronous version of DS-MLR. In section \ref{sec:Experiments}, we present empirical results running asynchronous DS-MLR covering all the regimes of data and model parallelism shown in Table \ref{table:data_model_parallel_compact}. Finally, Section \ref{sec:Conclusion} concludes the paper.


\section{Related Work}
\label{sec:RelatedWork}

There has been a flurry of work in the past few years on developing distributed optimization algorithms for machine learning. Particularly, stochastic gradient descent based approaches have proven to be very fruitful since they make frequent parameter updates and converge much more rapidly \citep{Bot10}. Several algorithms for parallelizing SGD have been proposed in the past such as Hogwild \citep{RecReWriNiu11}, Parallel SGD \citep{ZinWeiLiSmo10}, DSGD \citep{GemNijHaaSis11}, FPSGD \citep{ChiZhuJuaLin13} and more recently, Parameter Server \citep{LiZhoYanLiXia13} and Petuum \citep{xing2015petuum}. Although the importance of data and model parallelism has been recognized in Parameter Server and the Petuum framework \citep{xing2015petuum}, to the best of our knowledge this has not been exploited in their specific instantiations such as applications to multinomial logistic regression \citep{XieKimZhoHoKumYuXin15}. We believe this is because \citep{XieKimZhoHoKumYuXin15} does not reformulate the problem like the way DS-MLR does. Several problems in machine learning are not naturally well-suited for {\it simultaneous} data and model parallelism, and therefore such reformulations are essential in identifying a suitable structure. Moreover, the Parameter Server \citep{LiZhoYanLiXia13} is only a data parallel approach. This is because although the server maintains the model as a distributed key-value store, each worker maintains a working set of the entire model. This is in contrast with DS-MLR where at any given point of time, the model stays truly partitioned into mutually exclusive blocks across the workers.

Alternating direction method of multipliers (ADMM) \citep{BoyParChuPelEck11} is another popular technique used to parallelize convex optimization problems. The key idea in ADMM is to reformulate the original optimization problem 
by introducing redundant linear constraints. This makes the new objective easily {\it data parallel}. However, ADMM suffers from a similar drawback as L-BFGS especially when applied to a multinomial logistic regression model. This is because the number of redundant constraints that need to be introduced are $N$ (\# data points) $\times$ $K$ (\# classes) which is a major bottleneck to model parallelism. Moreover, the convergence rate of ADMM was found to be slow on multinomial logistic regression problems as discussed in \citep{GopYan13}.

Log-Concavity (LC) method \cite{GopYan13} proposed a distributed {\it model parallel} approach to solve the multinomial logistic regression problem by linearizing the log-partition function based on its variational form \citep{Bouchard07}. However, because their formulation is only model parallel, the entire data has to be replicated across all the workers which is not practical for real world applications. Interestingly, we noticed that the objective function of the LC method can also be recovered from (\ref{eq:mlr_obj4}).

Our reformulation in DS-MLR exploits the doubly-separable structure in terms of global model parameters and some local auxiliary variables.
Other doubly-separable methods also exist such as NOMAD \citep{YunYuHsiVisDhi13} for matrix completion and RoBiRank \citep{YunRamVis14} for latent collaborative retrieval. NOMAD \citep{YunYuHsiVisDhi13} is a distributed-memory, asynchronous and decentralized algorithm and RoBiRank \citep{YunRamVis14} is also a distributed-memory but synchronous algorithm. Parameter Server and HogWild \citep{RecReWriNiu11} are asynchronous approaches. In Hogwild, parameter updates are executed in parallel using different threads under the assumption that any two serial updates are not likely to collide on the same data point when the data is sparse. DS-MLR does not make any such assumptions. It has both synchronous and asynchronous variants and the latter is in the spirit of NOMAD.





\section{Multinomial Logistic Regression}
\label{sec:MultLogistRegr}
Suppose we are provided training data which consists of $N$ data
points $(\xb_1, y_1), (\xb_2, y_2), \ldots, (\xb_N, y_N)$ where $\xb_i \in
\RR^d$ is a $d$-dimensional feature vector and $y_i \in \cbr{1, 2,
  \ldots, K}$ is a label associated with it; $K$ denotes the number of
class labels. Let's also define an indicator variable $y_{ik} = I(y_i =
k)$ denoting the membership of data point $\xb_i$ to class $k$. 
The probability that $\xb_{i}$ belongs to class $k$ is given by:
\begin{align}
\label{eq:softmax}
p (y = k|\xb_{i}) &= \frac{\exp(\wb_{k}^T \xb_{i})}{\sum_{j=1}^{K} \exp(\wb_j^T \xb_{i})},
\end{align}
where $W = \{\wb_1, \wb_2, \ldots, \wb_K\}$ denotes the parameter vector for each of the $K$ classes.
Using the negative log-likelihood of \eqref{eq:softmax} as a loss function, the objective function of MLR can be written as:
\begin{align}
\label{eq:mlr_obj}
L_1(W) &= \frac{\lambda}{2} \sum_{k=1}^{K} \norm{\wb_k}^2 - \frac{1} {N} \sum_{i=1}^{N} \sum_{k=1}^{K} y_{ik} \wb_k^T \xb_i 
+ \frac{1}{N} \sum_{i=1}^{N} \log \rbr{\sum_{k=1}^{K} \exp(\wb_k^T \xb_i)},
\end{align}
where $\norm{\wb_k}^2$ regularizes the objective, and $\lambda$ is a tradeoff parameter. Optimizing the above objective function (\ref{eq:mlr_obj}) when the
number of classes $K$ is large, is extremely challenging as computing
the \textit{log partition function} involves
summing up over a large number of classes. In addition, it couples the class level parameters $\wb_k$ together,
making it difficult to distribute computation. In this paper, we present an alternative formulation for MLR, to address this challenge.

\section{Doubly-Separable Multinomial Logistic Regression (DS-MLR)}
\label{sec:Reformulation}
In this section, we present a reformulation of the MLR problem, which is closer in spirit to dual-decomposition methods \citep{BoyVan04}.
We begin by first rewriting (\ref{eq:mlr_obj}) as,
\begin{align}
\label{eq:mlr_obj2}
L_1(W) &= \frac{\lambda}{2} \sum_{k=1}^{K} \norm{\wb_k}^2 - \frac{1} {N} \sum_{i=1}^{N} \sum_{k=1}^{K} y_{ik} \wb_k^T \xb_i 
- \frac{1}{N} \sum_{i=1}^{N} \log \frac{1}{\sum_{k=1}^{K} \exp(\wb_k^T \xb_i)},
\end{align}

This can be expressed as a constrained optimization problem,
\begin{align}
\label{eq:mlr_obj3}
L_1(W, A) &= \frac{\lambda}{2} \sum_{k=1}^{K} \norm{\wb_k}^2 - \frac{1} {N} \sum_{i=1}^{N} \sum_{k=1}^{K} y_{ik} \wb_k^T \xb_i 
 - \frac{1}{N} \sum_{i=1}^{N} \log a_{i}, \\
& \text{s.t.} \quad a_{i} = \frac{1}{\sum_{k=1}^{K} \exp(\wb_k^T \xb_i)}, \quad i=1, 2, \ldots N \nonumber
\end{align}
where $A = \{a_i\}_{i=1,\dots,N}$.\\

Observe that this resembles dual-decomposition methods of the form: \\
$\min_{x, z} f(x) + g(z)$ s.t. $Ax + Bz = c$, where $f$ and $g$ are convex functions. In our objective function (\ref{eq:mlr_obj3}), the decomposable functions 
are $f(W)$ and $g(A)$ respectively. Introducing Lagrange multipliers, $\beta_{i}, \quad i=1,2 \ldots N$, we obtain the equivalent unconstrained minimax problem \citep{BoyVan04},
\begin{align}
\label{eq:mlr_obj4}
L_2(W, A, \beta) &= \frac{\lambda}{2} \sum_{k=1}^{K} \norm{\wb_k}^2 - \frac{1} {N} \sum_{i=1}^{N} \sum_{k=1}^{K} y_{ik} \wb_k^T \xb_i 
- \frac{1}{N} \sum_{i=1}^{N} \log a_{i} + \frac{1}{N} \sum_{i=1}^{N} \sum_{k=1}^{K} \beta_{i} \; a_{i} \exp(\wb_k^T \xb_i) 
- \frac{1}{N} \sum_{i=1}^{N} \beta_{i}
\end{align}
It is known that dual-decomposition methods can reliably find a stationary point, therefore the solution obtained by our method is also globally optimal. We discuss the proof of convergence in section \ref{sec:convergence}. The updates for the primal variables $W$, $A$ and dual variable $\beta$ can be written as follows:\\
\begin{align}
\label{eq:update_w}
W_{k}^{t+1} &\leftarrow \argmin_{W_{k}} L_2(W_{k}, a^{t}, \beta^{t}), \\
\label{eq:update_a}
a_{i}^{t+1} &\leftarrow \argmin_{a_{i}} L_2(W_{k}^{t+1}, a_{i}^{t}, \beta_{i}^{t}), \\
\label{eq:update_beta}
\beta_{i}^{t+1} &\leftarrow \beta_{i}^{t} + \rho \rbr{a_{i}^{t+1} \sum_{k=1}^{K} \exp \rbr{{w_{k}^{T}}^{t+1} x_{i}} - 1}
\end{align}
Here, $W_{k}^{t+1}$ and $a_{i}^{t+1}$ can be obtained by any black-box optimization procedure, while $\beta_{i}^{t+1}$ is updated via dual-ascent using a step-length $\rho$. Intuitively, the dual-ascent update of $\beta$ penalizes any violation of the constraint in problem (\ref{eq:mlr_obj3}). 

We now make the following interesting observations in these updates:\\
\textit{Update for} $a_{i}^{t+1}$: When (\ref{eq:update_a}) is solved to optimality, $a_{i}$ admits an exact closed-form solution given by, 
\begin{align}
\label{eq:closedform_a}
a_{i} &= \frac{1}{\beta_{i} \sum_{k=1}^{K} \exp(\wb_k^T \xb_i)},
\end{align}

\textit{Update for} $\beta_{i}^{t+1}$: As a consequence of the above exact solution for $a_{i}$, the dual-ascent update for $\beta_{i}$ is no longer needed, since the penalty is always zero during such a projection if $\beta_{i}$ is set to a constant equal to 1. \\

\textit{Update for} $W_{k}^{t+1}$: This is the only update that we need to handle numerically. \\

$L_2(W, A)$ can be first written in this form,
\begin{align}
\label{eq:mlr_obj5}
 L_2(W, B) &= \sum_{i=1}^{N} \sum_{k=1}^{K} \rbr{ \frac{\lambda}{2N} \norm{\wb_k}^2 - \frac{1} {N} y_{ik} \wb_k^T \xb_i 
- \frac{1}{NK} b_{i} + \frac{1}{N} \exp(\wb_k^T \xb_i + b_{i}) - \frac{1}{N K} }
\end{align}
where we denote $b_i = \log (a_i)$ for convenience and $B = \{b_i\}_{i=1,\dots,N}$. The objective function is now \textit{doubly-separable} \citep{Yun14} since,
\begin{align}
L_2(w_1, \ldots, w_K, b_1, \ldots, b_N) &= \sum\limits_{i=1}^N \sum\limits_{k=1}^K f_{ki}(\wb_{k}, b_i)
\end{align}
where
\begin{align}
f_{ki}(\wb_{k}, b_i) &= \frac{\lambda}{2N} \|\wb_{k}\|^2 - \frac{y_{ik} \wb_{k}^T\xb_{i}}{N}  + \frac{1}{N} \exp(\wb_{k}^T\xb_{i} + b_i) 
- \frac{b_i}{N K} - \frac{1}{N K} .
\end{align}
Obtaining such a form for the objective function is key to achieving simultaneous data and model parallelism. It is worth pointing out that such an objective function can also be derived using the variational form for the log-partition function \citep{Bouchard07}.

\textit{Stochastic Optimization:}
Minimizing $L_2(W, B)$ involves computing the gradients of
eqn (\ref{eq:mlr_obj5}) w.r.t. $\wb_{k}$ which is
often computationally expensive. Instead, one can compute
\emph{stochastic gradients} \citep{RobMon51} which are computationally
cheaper than the exact gradient, and perform stochastic updates as
follows:
\begin{align}
&\wb_{k} \leftarrow \wb_{k} - \eta K\rbr{\lambda \wb_{k} - y_{ik} \xb_{i} + \exp(\wb_{k}^T\xb_{i} + b_i) \xb_{i}} \label{eq:dsmlr_sgdupdatew}
\end{align}
where $\eta$ is the learning rate for $\wb_{k}$. Being an unbiased stochastic gradient estimator, the standard convergence guarantees of SGD apply here \citep{KusYin03}.

Our formulation of DS-MLR in eqn (\ref{eq:mlr_obj5}) offers several key advantages: 
\begin{enumerate}
\item Observe that the objective function $L_2(W, B)$, now splits as summations over $N$ data points and $K$ classes. This means, each term in stochastic updates only depends one data point $i$ and one class $k$. We exploit this to achieve simultaneous data and model parallelism.
\item We are able to update the variational parameters $b_i$ in closed-form, avoiding noisy stochastic updates. This improves our overall convergence. 
\item Our formulation lends itself nicely to an asynchronous implementation. Section \ref{sec:dsmlr_async} describes this in more detail.
\item Traditionally, dual-decomposition methods (e.g. ADMM) have been able to exploit distributed computing by solving separable sub-problems on multiple machines. Although this naturally led to {\it data parallelism}, it has not been clear so far how to {\it separate the model parameters}. For e.g. when applied to the MLR problem in the naive way, these methods introduce an additional $O\rbr{N K}$ storage (please refer to discussion in section 3.4 in \citep{GopYan13}). In our reformulation (which bears a flavor of dual-decomposition methods), we demonstrate one way in which we can achieve both of these goals simultaneously. This is done by constraining the problem differently as shown in eqn (\ref{eq:mlr_obj3}) in our paper.
\end{enumerate}


\section{Distributing the Computation of DS-MLR}
\label{sec:dsmlr_distributed}

\subsection{DS-MLR Sync}
\label{sec:dsmlr_sync}
We first describe the distributed DS-MLR Synchronous algorithm in Algorithm~\ref{alg:dsmlr_1delay}. The data and parameters are distributed among the $P$ processors as illustrated in Figure \ref{fig:ds_sgd} where the row-blocks and column-blocks represent data $X^{(p)}$ and weights $W^{(p)}$ on each local processor respectively. The algorithm proceeds by running $T$ iterations in parallel on each of the $P$ workers arranged in a ring network topology. 


\begin{figure}[ht]
	\centering
	\includegraphics[width=5in]{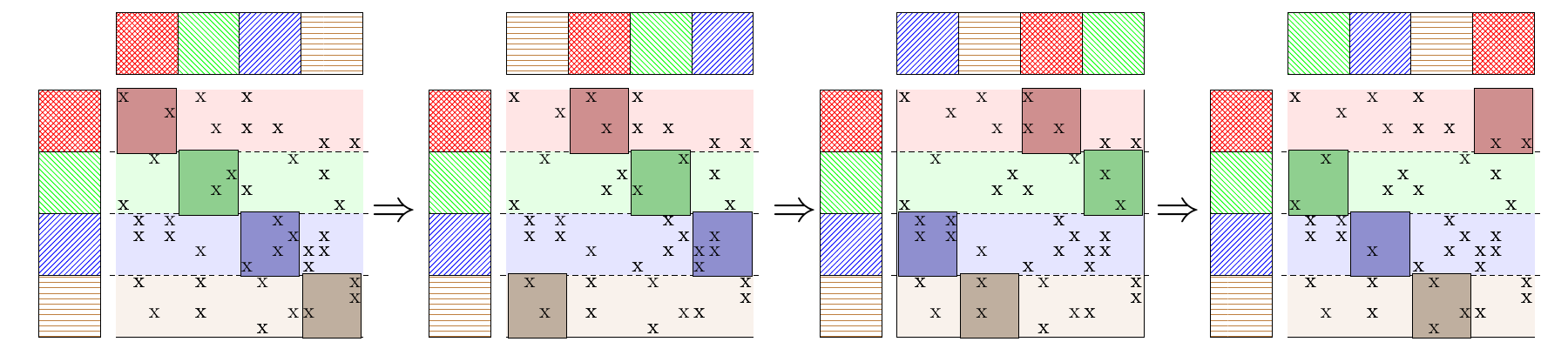}
	\caption{$P=4$ inner-epochs of distributed SGD. Each worker updates mutually-exclusive blocks of data and parameters as shown by the dark colored diagonal blocks \citep{GemNijHaaSis11}.}
	\label{fig:ds_sgd}
\end{figure}


Each iteration consist of $2P$ inner-epochs. During the first $P$ inner-epochs, each worker sends/receives its parameters $W^{(p)}$ to/from the adjacent machine and performs stochastic $W^{(p)}$ updates using the block of data $X^{(p)}$ and parameters $W^{(p)}$ that it owns. The second $P$ inner-epochs are used to pass around the $W^{(p)}$ to compute the $b^{(p)}$ exactly using (\ref{eq:closedform_a}). 

\begin{algorithm}[!ht]
  \begin{algorithmic}[1]
    \STATE {$K$: \# classes, $P$: \# workers, $T$: total outer iterations, $t$: outer iteration index, $s$: inner epoch index}
    \STATE{$W^{(p)}$: weights per worker, $b^{(p)}$: variational parameters per worker}

        \STATE{Initialize $W^{(p)}=0$, $b^{(p)}=\frac{1}{K}$}
        \FORALL { $p = 1, 2, \ldots, P $ in parallel}
	\FORALL {$t = 1, 2, \ldots, T$}
		\FORALL{$s = 1, 2, \ldots, P$}
			\STATE{\text{Send} $W^{(p)}$ \text{to worker on the right}}
			\STATE{\text{Receive} $W^{(p)}$ \text{from worker on the left}}
			\STATE{\text{Update} $W^{(p)}$ \text{stochastically using (\ref{eq:dsmlr_sgdupdatew})}}
		\ENDFOR
		
		\FORALL{$s = 1, 2, \ldots, P$}
			\STATE{\text{Send} $W^{(p)}$ \text{to worker on the right}}
			\STATE{\text{Receive} $W^{(p)}$ \text{from worker on the left}}
                         \STATE{\text{Compute partial sums}}
		\ENDFOR
		\STATE{\text{Update} $b^{(p)}$} \text{exactly (\ref{eq:closedform_a}) using the partial sums}
	\ENDFOR
	\ENDFOR
  \end{algorithmic}
  \caption{DS-MLR Synchronous}
  \label{alg:dsmlr_1delay}
\end{algorithm}



\subsection{DS-MLR Async}
\label{sec:dsmlr_async}
The performance of DS-MLR can be significantly improved by performing computation and communication in parallel. Based on this observation, we present an asynchronous version of DS-MLR. Due to the double-separable nature of our objective function (\ref{eq:mlr_obj5}), we can readily apply the NOMAD algorithm proposed in \cite{YunYuHsiVisDhi13}. The entire DS-MLR Async algorithm is described in Algorithm \ref{alg:dsmlr_nomad}.

\begin{algorithm}[!ht]
\begin{algorithmic}[1]
	\STATE {$K$: total \# classes, $P$: total \# workers, $T$: total outer iterations, $W^{(p)}$: weights per worker}
	\STATE{$b^{(p)}$: variational parameters per worker, \; queue[$P$]: array of $P$ worker queues}
	\vspace*{0.1cm}
	\STATE{Initialize $W^{(p)}=0$, $b^{(p)}=\frac{1}{K}$ {\small \texttt{//Initialize parameters}}}
	\vspace*{0.1cm}
	\FOR {$k \in W^{(p)}$}
        		\STATE{Pick $q$ uniformly at random}
        		\STATE{queue[$q$].push($(k, \wb_{k}))$ {\small \texttt{//Initialize worker queues}}}
        \ENDFOR
        	\vspace*{0.1cm}
	\STATE{{\small \texttt{//Start P workers}}}        
        \FORALL { $p = 1, 2, \ldots, P $ in parallel}
        		\FORALL {$t = 1, 2, \ldots, T$}
			\REPEAT
				\STATE {$(k, \wb_{k}) \leftarrow \text{queue[p].pop()}$}
				\STATE {\text{Update} $\wb_{k}$ \text{stochastically using (\ref{eq:dsmlr_sgdupdatew})}}
				\STATE{\text{Compute partial sums}}
				\STATE{Compute index of next queue to push to: $\qhat$}
				\STATE{queue[$\qhat$].push($(k, \wb_{k}))$}
			\UNTIL{\# of updates is equal to $K$}
			\STATE{\text{Update} $b^{(p)}$} \text{exactly (\ref{eq:closedform_a}) using the partial sums}
		\ENDFOR
        \ENDFOR
  \end{algorithmic}
  \caption{DS-MLR Asynchronous}
  \label{alg:dsmlr_nomad}
\end{algorithm}

The algorithm begins by distributing the data and parameters among $P$ workers in the same fashion as in the 
synchronous version. However, here we also maintain $P$ worker queues. Initially the parameters $W^{(p)}$ are 
distributed uniformly at random across the queues. The workers subsequently
can run their updates in parallel as follows: each one pops a parameter $\wb_{k}$ out the queue, updates it stochastically and pushes it into the queue of the next worker. Simultaneously, each worker also records the partial sum (the local contribution of each worker towards the global normalization constant $\sum_{k=1}^K exp(w_k^T x_i)$) that is required for updating the variational parameters. This process repeats until $K$ updates have been made which is equivalent to saying that each worker has updated every parameter $\wb_{k}$. Following this, the worker updates all its variational parameters $b^{(p)}$ exactly using the partial sums (\ref{eq:closedform_a}). For simplicity of explanation, we restricted Algorithm \ref{alg:dsmlr_nomad} to $P$ workers on a single-machine. However, in our actual implementation, we follow a Hybrid Architecture. This means that there are multiple threads running on a single machine in addition to multiple machines sharing the load across the network. Therefore, in this setting, each worker (thread) first passes around the parameter $\wb_{k}$ across all the threads on its machine. Once this is completed, the parameter is tossed onto the queue of the first thread on the next machine. Such a Hybrid Architecture does improve the experimental performance significantly since communication among threads locally within a machine is much less expensive than communication across threads which reside on different machines in the network. 

\begin{figure}[ht]
  \begin{subfigure}[t]{0.22\textwidth}
    \begin{tikzpicture}[scale=0.25]
      
      \draw (0,0) rectangle (\gridwidth * \nodenum, \gridwidth * \nodenum);
  
    \fill [red!10] (0, 3 * \gridwidth) 
    rectangle (\gridwidth * \nodenum, 4 * \gridwidth);

    \fill [green!10] (0, 2 * \gridwidth) 
    rectangle (\gridwidth * \nodenum, 3 * \gridwidth);

    \fill [blue!10] (0, 1 * \gridwidth) 
    rectangle (\gridwidth * \nodenum, 2 * \gridwidth);

    \fill [brown!10] (0,0) 
    rectangle (\gridwidth * \nodenum, 1 * \gridwidth);

    \foreach \y in {1,2,3}
    {
      \draw[densely dashed] 
      (-\gridwidth * 0.1, \gridwidth * \y)
      --
      (\gridwidth * \nodenum + \gridwidth * 0.1,  \gridwidth * \y);
    }
    
    \drawrowpart

    \drawcolpart{0}{3}{\ptrna}{\ptcla}
    \drawcolpart{4}{7}{\ptrnb}{\ptclb}
    \drawcolpart{8}{11}{\ptrnc}{\ptclc}
    \drawcolpart{12}{15}{\ptrnd}{\ptcld}

    \dtpt{5}{1} \dtpt{0}{15} \dtpt{13}{12} \dtpt{13}{6} \dtpt{14}{8}
    \dtpt{0}{8} \dtpt{7}{10} \dtpt{15}{2} \dtpt{10}{5} \dtpt{8}{13}
    \dtpt{5}{3} \dtpt{12}{5} \dtpt{10}{7} \dtpt{11}{6} \dtpt{15}{3}
    \dtpt{6}{9} \dtpt{13}{5} \dtpt{1}{3} \dtpt{8}{15} \dtpt{1}{6}
    \dtpt{1}{7} \dtpt{8}{9} \dtpt{12}{4} \dtpt{12}{1} \dtpt{13}{10}
    \dtpt{9}{0} \dtpt{8}{4} \dtpt{15}{12} \dtpt{3}{6} \dtpt{10}{13}
    \dtpt{6}{13} \dtpt{2}{11} \dtpt{5}{5} \dtpt{2}{1} \dtpt{3}{7}
    \dtpt{9}{3} \dtpt{11}{1} \dtpt{3}{14} \dtpt{11}{11} \dtpt{5}{15}

    \drawcolpart{0}{3}{\ptrna}{\ptcla}
    \drawcolpart{4}{7}{\ptrnb}{\ptclb}
    \drawcolpart{8}{11}{\ptrnc}{\ptclc}
    \drawcolpart{12}{15}{\ptrnd}{\ptcld}

    \asnbox{0}{0}{red} \asnbox{1}{0}{red} \asnbox{2}{0}{red} \asnbox{3}{0}{red}
    \asnbox{4}{1}{green} \asnbox{5}{1}{green} \asnbox{6}{1}{green} \asnbox{7}{1}{green}
    \asnbox{8}{2}{blue} \asnbox{9}{2}{blue} \asnbox{10}{2}{blue} \asnbox{11}{2}{blue}
    \asnbox{12}{3}{brown} \asnbox{13}{3}{brown} \asnbox{14}{3}{brown} \asnbox{15}{3}{brown}
      
    \dtpt{5}{1} \dtpt{0}{15} \dtpt{13}{12} \dtpt{13}{6} \dtpt{14}{8}
    \dtpt{0}{8} \dtpt{7}{10} \dtpt{15}{2} \dtpt{10}{5} \dtpt{8}{13}
    \dtpt{5}{3} \dtpt{12}{5} \dtpt{10}{7} \dtpt{11}{6} \dtpt{15}{3}
    \dtpt{6}{9} \dtpt{13}{5} \dtpt{1}{3} \dtpt{8}{15} \dtpt{1}{6}
    \dtpt{1}{7} \dtpt{8}{9} \dtpt{12}{4} \dtpt{12}{1} \dtpt{13}{10}
    \dtpt{9}{0} \dtpt{8}{4} \dtpt{15}{12} \dtpt{3}{6} \dtpt{10}{13}
    \dtpt{6}{13} \dtpt{2}{11} \dtpt{5}{5} \dtpt{2}{1} \dtpt{3}{7}
    \dtpt{9}{3} \dtpt{11}{1} \dtpt{3}{14} \dtpt{11}{11} \dtpt{5}{15}

    \dtpt{2}{15} \dtpt{3}{10} \dtpt{4}{9} \dtpt{15}{12} \dtpt{14}{12}

    \end{tikzpicture}
    \caption{Initial assignment of $W$ and $X$. Each worker works
      only on the diagonal active area in the beginning.}
  \end{subfigure}
  \quad
  \begin{subfigure}[t]{0.22\textwidth}
    \begin{tikzpicture}[scale=0.25]

      \boxlayout
      
      \drawrowpart

      \drawcolpart{0}{0}{\ptrna}{\ptcla}
      \drawcolpart{2}{3}{\ptrna}{\ptcla}
      \drawcolpart{4}{7}{\ptrnb}{\ptclb}
      \drawcolpart{8}{11}{\ptrnc}{\ptclc}
      \drawcolpart{12}{15}{\ptrnd}{\ptcld}

      \asnbox{0}{0}{red} \asnbox{2}{0}{red} \asnbox{3}{0}{red}
      \asnbox{4}{1}{green} \asnbox{5}{1}{green} \asnbox{6}{1}{green} \asnbox{7}{1}{green}
      \asnbox{8}{2}{blue} \asnbox{9}{2}{blue} \asnbox{10}{2}{blue} \asnbox{11}{2}{blue}
      \asnbox{12}{3}{brown} \asnbox{13}{3}{brown} \asnbox{14}{3}{brown} \asnbox{15}{3}{brown}
      
      \dasnbox{1}{0} \dasnbox{1}{3}
      \mvarr{1}{0}{1}{3}

      \dtpt{5}{1} \dtpt{0}{15} \dtpt{13}{12} \dtpt{13}{6} \dtpt{14}{8}
      \dtpt{0}{8} \dtpt{7}{10} \dtpt{15}{2} \dtpt{10}{5} \dtpt{8}{13}
      \dtpt{5}{3} \dtpt{12}{5} \dtpt{10}{7} \dtpt{11}{6} \dtpt{15}{3}
      \dtpt{6}{9} \dtpt{13}{5} \dtpt{1}{3} \dtpt{8}{15} \dtpt{1}{6}
      \dtpt{1}{7} \dtpt{8}{9} \dtpt{12}{4} \dtpt{12}{1} \dtpt{13}{10}
      \dtpt{9}{0} \dtpt{8}{4} \dtpt{15}{12} \dtpt{3}{6} \dtpt{10}{13}
      \dtpt{6}{13} \dtpt{2}{11} \dtpt{5}{5} \dtpt{2}{1} \dtpt{3}{7}
      \dtpt{9}{3} \dtpt{11}{1} \dtpt{3}{14} \dtpt{11}{11} \dtpt{5}{15}
      \dtpt{2}{15} \dtpt{3}{10} \dtpt{4}{9} \dtpt{15}{12} \dtpt{14}{12}

    \end{tikzpicture}
    \caption{After a worker finishes processing column $k$, it sends
      the corresponding item parameter $\wb_{k}$ to another worker.
      Here, $\wb_{2}$ is sent from worker $1$ to $4$.  }
  \end{subfigure}
  \quad
  \begin{subfigure}[t]{0.22\textwidth}
    \begin{tikzpicture}[scale=0.25]

      \boxlayout
    
      \drawrowpart

      \drawcolpart{0}{0}{\ptrna}{\ptcla}
      \drawcolpart{1}{1}{\ptrnd}{\ptcld}
      \drawcolpart{2}{3}{\ptrna}{\ptcla}
      \drawcolpart{4}{7}{\ptrnb}{\ptclb}
      \drawcolpart{8}{11}{\ptrnc}{\ptclc}
      \drawcolpart{12}{15}{\ptrnd}{\ptcld}

      \asnbox{0}{0}{red} \asnbox{2}{0}{red} \asnbox{3}{0}{red}
      \asnbox{4}{1}{green} \asnbox{5}{1}{green} \asnbox{6}{1}{green} \asnbox{7}{1}{green}
      \asnbox{8}{2}{blue} \asnbox{9}{2}{blue} \asnbox{10}{2}{blue} \asnbox{11}{2}{blue}
      \asnbox{12}{3}{brown} \asnbox{13}{3}{brown} \asnbox{14}{3}{brown} \asnbox{15}{3}{brown}
      \asnbox{1}{3}{brown}

      \dtpt{5}{1} \dtpt{0}{15} \dtpt{13}{12} \dtpt{13}{6} \dtpt{14}{8}
      \dtpt{0}{8} \dtpt{7}{10} \dtpt{15}{2} \dtpt{10}{5} \dtpt{8}{13}
      \dtpt{5}{3} \dtpt{12}{5} \dtpt{10}{7} \dtpt{11}{6} \dtpt{15}{3}
      \dtpt{6}{9} \dtpt{13}{5} \dtpt{1}{3} \dtpt{8}{15} \dtpt{1}{6}
      \dtpt{1}{7} \dtpt{8}{9} \dtpt{12}{4} \dtpt{12}{1} \dtpt{13}{10}
      \dtpt{9}{0} \dtpt{8}{4} \dtpt{15}{12} \dtpt{3}{6} \dtpt{10}{13}
      \dtpt{6}{13} \dtpt{2}{11} \dtpt{5}{5} \dtpt{2}{1} \dtpt{3}{7}
      \dtpt{9}{3} \dtpt{11}{1} \dtpt{3}{14} \dtpt{11}{11} \dtpt{5}{15}
      \dtpt{2}{15} \dtpt{3}{10} \dtpt{4}{9} \dtpt{15}{12}
      \dtpt{14}{12}

    \end{tikzpicture}
    \caption{Upon receipt, the column is processed by the new worker.
      Here, worker $4$ can now process column $2$ since it owns the
      column.}
  \end{subfigure}
  \quad
  \begin{subfigure}[t]{0.22\textwidth}
    \begin{tikzpicture}[scale=0.25]

      \boxlayout

      \drawrowpart

      \asnbox{0}{2}{blue} \asnbox{1}{2}{blue} \asnbox{2}{0}{red} \asnbox{3}{1}{green}
      \asnbox{4}{1}{green} \asnbox{5}{3}{brown} \asnbox{6}{0}{red} \asnbox{7}{3}{brown}
      \asnbox{8}{3}{brown} \asnbox{9}{3}{brown} \asnbox{10}{1}{green} \asnbox{11}{0}{red}
      \asnbox{12}{1}{green} \asnbox{13}{2}{blue} \asnbox{14}{0}{red} \asnbox{15}{1}{green}

      \drawcolpart{0}{1}{\ptrnc}{\ptclc}
      \drawcolpart{2}{2}{\ptrna}{\ptcla}
      \drawcolpart{3}{4}{\ptrnb}{\ptclb}
      \drawcolpart{5}{5}{\ptrnd}{\ptcld}
      \drawcolpart{6}{6}{\ptrna}{\ptcla}
      \drawcolpart{7}{9}{\ptrnd}{\ptcld}
      \drawcolpart{10}{10}{\ptrnb}{\ptclb}
      \drawcolpart{11}{11}{\ptrna}{\ptcla}
      \drawcolpart{12}{12}{\ptrnb}{\ptclb}
      \drawcolpart{13}{13}{\ptrnc}{\ptclc}
      \drawcolpart{14}{14}{\ptrna}{\ptcla}
      \drawcolpart{15}{15}{\ptrnb}{\ptclb}
      
      \dtpt{5}{1} \dtpt{0}{15} \dtpt{13}{12} \dtpt{13}{6} \dtpt{14}{8}
      \dtpt{0}{8} \dtpt{7}{10} \dtpt{15}{2} \dtpt{10}{5} \dtpt{8}{13}
      \dtpt{5}{3} \dtpt{12}{5} \dtpt{10}{7} \dtpt{11}{6} \dtpt{15}{3}
      \dtpt{6}{9} \dtpt{13}{5} \dtpt{1}{3} \dtpt{8}{15} \dtpt{1}{6}
      \dtpt{1}{7} \dtpt{8}{9} \dtpt{12}{4} \dtpt{12}{1} \dtpt{13}{10}
      \dtpt{9}{0} \dtpt{8}{4} \dtpt{15}{12} \dtpt{3}{6} \dtpt{10}{13}
      \dtpt{6}{13} \dtpt{2}{11} \dtpt{5}{5} \dtpt{2}{1} \dtpt{3}{7}
      \dtpt{9}{3} \dtpt{11}{1} \dtpt{3}{14} \dtpt{11}{11} \dtpt{5}{15}

      \dtpt{2}{15} \dtpt{3}{10} \dtpt{4}{9} \dtpt{15}{12} \dtpt{14}{12}

    \end{tikzpicture}
    \caption{During the execution of the algorithm, the ownership of the
      global parameters (weight vectors) $\wb_{k}$ changes.}
  \end{subfigure}
  \caption{Illustration of the communication pattern in DS-MLR Async algorithm (based on the NOMAD algorithm \cite{YunYuHsiVisDhi13})}
  \label{fig:nomad_scheme}
\end{figure}


\section{Convergence}
\label{sec:convergence}
Although the semi-stochastic nature of DS-MLR makes it hard to directly apply the existing convergence results, under standard assumptions, it can be shown that it finds $\epsilon$ accurate solutions to the original objective $L_1$ in $T=O(1/\epsilon^2)$ iterations.
\begin{theorem}
\label{thrm:syncdsmlr_proof}
  Suppose all $\nbr{\xb_{i}} \le r$ for a constant $r > 0$.
  Let the step size $\eta$ in \eqref{eq:dsmlr_sgdupdatew} decay at the rate of $1/\sqrt{t}$.
  Then, $\exists$ constant $C$ independent of $N, K, D$ and $P$,
  such that
  \begin{align}
    \min_{t=1,\ldots,T} L_1(W^t) - L_1(W) \le \frac{C}{\sqrt{t}},\quad \forall W,
  \end{align}
  where $W^t$ is value of $W$ at the end of the iteration $t$ and $\xb_{i}$ denotes the data point. N, K, D, and P denote the number of data points, classes, dimensions and workers respectively.
\end{theorem}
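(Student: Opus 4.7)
The plan is to analyze DS-MLR as a stochastic approximation scheme that approximately minimizes $L_1$ by exploiting the identity $L_1(W) = \min_B L_2(W, B) + \text{const}$. First I would verify by direct substitution that plugging $b_i^*(W) = -\log\sum_k \exp(\wb_k^\top \xb_i)$ into $L_2$ recovers $L_1(W)$ up to an additive constant, so that $B^*(W)$ is the unique minimizer of $L_2(W,\cdot)$. Along the way I would confirm that $L_2$ is jointly convex in $(W,B)$: each summand $\exp(\wb_k^\top \xb_i + b_i)$ has a rank-one PSD Hessian in $(\wb_k, b_i)$ and the remaining terms are affine in $B$ and convex-quadratic-plus-linear in $W$. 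This makes $L_2$ a legitimate convex reformulation and ensures via the envelope theorem that $\nabla_W L_1(W) = \nabla_W L_2(W, B^*(W))$.

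Second, I would show that the stochastic update \eqref{eq:dsmlr_sgdupdatew} is an unbiased estimator of $\nabla L_1$ whenever the variational parameters are current. When $b_i = b_i^*(W)$, the term $\exp(\wb_k^\top \xb_i + b_i)$ equals the softmax probability $p(y=k\mid \xb_i)$, so the direction used by DS-MLR matches the $k$-th block of $\nabla L_1$ up to the scale factor $K$ accounted for in the update; uniform sampling of the index pair $(i,k)$ then yields an unbiased gradient estimator. Using $\|\xb_i\| \le r$ together with the coercivity of $L_1$ supplied by the regularizer $\tfrac{\lambda}{2}\sum_k\|\wb_k\|^2$, both iterates and stochastic gradients stay inside a bounded set whose diameter depends only on $\lambda$, $r$, and $L_1(W^0)$.

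The main obstacle is that $B$ is refreshed only at the end of each outer iteration (line 16 of Algorithm~\ref{alg:dsmlr_1delay} and the analogous step in Algorithm~\ref{alg:dsmlr_nomad}), so during an outer epoch the SGD steps on $W$ use stale values $b_i^*(W^{\tau(t)})$ with $\tau(t) \le t$. The stochastic gradient is therefore \emph{biased}, and I would bound the bias by combining (i) Lipschitz continuity of $b_i^*$ as a function of $W$ on the aforementioned bounded set (the log-sum-exp has a bounded Hessian there), (ii) Lipschitz continuity of the exponential term in $b_i$ (again on the bounded set), and (iii) a drift bound $\|W^t - W^{\tau(t)}\| \le \sum_{s=\tau(t)}^t \eta_s \|g_s\|$. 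With $\eta_t = \Theta(1/\sqrt{t})$ and bounded delay (the outer iteration length is finite), the per-iteration bias decays as $O(1/\sqrt{t})$ and is summable at a rate compatible with the noise bound.

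Finally, I would feed this biased-gradient estimate into the classical convex stochastic-approximation analysis of \citep{KusYin03} (or an equivalent biased-SGD theorem) to obtain $\min_{t \le T}\bigl[L_1(W^t) - L_1(W)\bigr] \le C/\sqrt{T}$ for all $W$. Because the sublevel-set diameter, the gradient bound, and the delay constants all depend only on $\lambda$, $r$, and $L_1(W^0)$ — and not on $N$, $K$, $D$, or $P$ — the constant $C$ has the dimension-free form asserted in the theorem.
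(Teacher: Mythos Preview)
Your plan follows the same skeleton as the paper's proof: use the variational identity and Danskin's theorem to write $\nabla_W L_1$ through the optimal $b_i$, bound the gradient error caused by the stale $b_i$ in terms of the drift of $W$ since the last refresh, and feed this into a convex descent argument. The paper executes this as a deterministic incremental-gradient analysis in the style of Nedic rather than as stochastic approximation via \citep{KusYin03}, but that distinction is largely cosmetic; the telescoping inequality it derives is the same one you would obtain.

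There is, however, one real gap in your dimension-free claim. You justify the last step by ``bounded delay (the outer iteration length is finite)'' and assert that the delay constants depend only on $\lambda$, $r$, and $L_1(W^0)$. But the number of inner updates between two refreshes of $B$ is $m$, which scales with $N$ (and, depending on how you count, $K$ or $P$); your drift bound $\sum_{s=\tau(t)}^{t}\eta_s\nbr{g_s}$ is therefore of order $m\,\eta_t$, not $\eta_t$, and this would push an $N$-dependent factor straight into $C$. The paper sidesteps this by taking the per-inner-step size to be $\eta_t/m$, so that $\nbr{W^t_k-W^t}\le (k/m)\,\eta_t r\le \eta_t r$ uniformly in $m$; the telescoping then runs at the outer-iteration level and $m$ cancels everywhere. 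Without this normalization (or an equivalent rescaling), your biased-SGD route will not deliver a constant independent of $N$, $K$, $D$, $P$ as the theorem asserts. Incidentally, the paper also obtains its gradient-error bound (Lemma~\ref{lem:gradient_error}) by direct algebra on the softmax denominator rather than a generic Lipschitz chain, and explicitly notes that convexity in $a_i$ is never used, so your joint-convexity step is harmless but unnecessary.
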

It is worth noting that this rate of convergence is independent of the size of the problem.
In particular, it is invariant to $P$, the number of workers. Therefore, as more workers become available,
the computational cost per iteration can be effectively distributed without sacrificing the overall convergence rate,
up to the point where communication cost becomes dominant. Detailed proof is relegated to the Appendix \ref{sec:app_rate}. 

Our key idea in casting both algorithms as stochastic gradient descent methods is to demonstrate that although the update of $W$ is based on a stale value of $b$ arising from the delayed updates,
such a delay still allows the error of the gradient of $L_1$ w.r.t $W$ to be bounded by $O(\eta)$, in Euclidean norm.

It should be noted that at this point our analysis and Theorem \ref{thrm:syncdsmlr_proof} applies only to the synchronous version of DS-MLR. The asynchronous version can be analyzed by
upper-bounding the delay parameter and following proof techniques on the lines of \citep{LebPedLac16} and \citep{LiuWriReBitSri15}.


\section{Experiments}
\label{sec:Experiments}
In our empirical study, we will focus on DS-MLR Async. We use a wide scale of real-world datasets of varying characteristics which is described in Table \ref{table:datasets}. Our experimental setup follows the same categorization we outline in Table \ref{table:data_model_parallel_compact}. 

\begin{table*}[!htbp]
\begin{flushleft}
  \renewcommand{\arraystretch}{1}
  \scalebox{0.75}{
\begin{tabular}{c|c|c|c|c|c|c|c}
Dataset & \# instances & \# features & \#classes & data (train + test) & parameters & sparsity (\% nnz) & Methods that apply \\ \hline
CLEF & 10,000 & 80 & 63 & 9.6 MB + 988 KB & 40 KB & 100 & {\it L-BFGS, LC, {\bf DS-MLR}} \\ 
NEWS20 & 11,260 & 53,975 & 20 & 21 MB + 14 MB & 9.79 MB & 0.21 & {\it L-BFGS, LC, {\bf DS-MLR}} \\ 
LSHTC1-small & 4,463 & 51,033 & 1,139 & 11 MB + 4 MB & 465 MB & 0.29 & {\it L-BFGS, LC, {\bf DS-MLR}} \\ 
LSHTC1-large & 93,805 & 347,256 & 12,294 & 258 MB + 98 MB & 34 GB & 0.049 & {\it LC, {\bf DS-MLR}} \\ 
ODP & 1,084,404 & 422,712 & 105,034 & 3.8 GB + 1.8 GB & 355 GB & 0.0533 & {\it LC, {\bf DS-MLR}} \\
YouTube8M-Video & 4,902,565 & 1,152 & 4,716 &  59 GB + 17 GB & 43 MB & 100 & {\it L-BFGS, {\bf DS-MLR}} \\
Reddit-Small & 52,883,089 & 1,348,182 & 33,225 &  40 GB + 18 GB & 358 GB & 0.0036 & {\it {\bf DS-MLR}} \\
Reddit-Full & 211,532,359 & 1,348,182 & 33,225 & 159 GB + 69 GB & 358 GB & 0.0036 & {\it {\bf DS-MLR}} \\
\end{tabular}}
\end{flushleft}
\caption{Dataset Characteristics}
\label{table:datasets}
\end{table*}

{\bf Hardware:} All single-machine experiments were run on a cluster with the configuration of two 8-core Intel Xeon-E5 processors and 32 GB memory per node. For multi-machine multi-core, we used Intel vLab Knights Landing (KNL) cluster with node configuration of Intel Xeon Phi 7250 CPU (64 cores, 200GB memory), connected through Intel Omni-Path (OPA) Fabric. The asynchronous, non-blocking property of DS-MLR makes it ideal to be run on KNL, which is a many-core (68 core, 272 threads) architecture with massive FPLOPs, memory bandwidth, and large memory space (MCDRAM + DDR).

{\bf Implementation Details:} We implemented our DS-MLR method in C++ using MPI for communication across nodes and Intel TBB for concurrent queues and multi-threading. To make the comparison fair, we re-implemented the LC \cite{GopYan13} method in C++ and MPI using ALGLIB for the inner optimization. Finally, for the L-BFGS baseline, we used the TAO solver (from PETSc).

\subsection{Data Fits and Model Fits}
For this experiment, we compare DS-MLR, L-BFGS and the LC methods on small scale datasets CLEF, NEWS20, LSHTC1-small. 

\begin{figure}[H]
 \centering
 \begin{tikzpicture}[scale=0.5]
    \begin{axis}[xmode=log, minor tick num=1,
      title={NEWS20 dataset: P=1$\times$1$\times$1, $\lambda=8.881e-05$, $\eta=1e4$},
      xlabel={time (secs)}, ylabel={objective}, ymin={1.45}, ymax={1.65}]

      \addplot[ultra thick, color=blue] table [x index=2, y index=1, header=true]
      {../../Experiments/serial/logs/dsmlrnomad_NEWS20.progress};
      \addlegendentry{DS-MLR}

      \addplot[ultra thick, color=red] table [x index=3, y index=1, header=true]
      {../../Experiments/serial/logs/tms_NEWS20.progress};
      \addlegendentry{L-BFGS}

      \addplot[ultra thick, color=green] table [x index=2, y index=1, header=true]
      {../../Experiments/serial/logs/pmlr_NEWS20.progress};
      \addlegendentry{LC}

    \end{axis}
  \end{tikzpicture}
  \begin{tikzpicture}[scale=0.5]
    \begin{axis}[xmode=log, minor tick num=1,
      title={CLEF dataset: P=1$\times$1$\times$1, $\lambda=0.001$, $\eta=1e2$},
      xlabel={time (secs)}, ylabel={objective}]

      \addplot[ultra thick, color=blue] table [x index=2, y index=1, header=true]
      {../../Experiments/serial/logs/dsmlrnomad_CLEF.progress};      
      \addlegendentry{DS-MLR}
      
      \addplot[ultra thick, color=red] table [x index=3, y index=1, header=true]
      {../../Experiments/serial/logs/tms_CLEF.progress};
      \addlegendentry{L-BFGS}

      \addplot[ultra thick, color=green] table [x index=2, y index=1, header=true]
      {../../Experiments/serial/logs/pmlr_CLEF.progress};
      \addlegendentry{LC}
      
    \end{axis}
  \end{tikzpicture}
     \begin{tikzpicture}[scale=0.5]
    \begin{axis}[xmode=log, minor tick num=1,
      title={LSHTC1-small dataset: P=1$\times$1$\times$1, $\lambda=2.2406e-07$, $\eta=1e4$},
      xlabel={time (secs)}, ylabel={objective}, ymin={0.0}, ymax={1.0}]

      \addplot[ultra thick, color=blue] table [x index=2, y index=1, header=true]
      {../../Experiments/serial/logs/dsmlrnomad_LSHTC1-small.progress};      
      \addlegendentry{DS-MLR}
      
      \addplot[ultra thick, color=red] table [x index=3, y index=1, header=true]
      {../../Experiments/serial/logs/tms_LSHTC1-small-tol1e-12.progress};
      \addlegendentry{L-BFGS}      
      
      \addplot[ultra thick, color=green] table [x index=2, y index=1, header=true]
      {../../Experiments/serial/logs/pmlr_LSHTC1-small.progress};
      \addlegendentry{LC}

    \end{axis}
  \end{tikzpicture}
  \caption{{\bf Data and Model both fit in memory}. In each plot, P=N$\times$M$\times$T denotes that there are $N$ nodes each running $M$ mpi tasks, with $T$ threads each.}
  \label{fig:plottime_smalldatasets}
\end{figure}
L-BFGS is a highly efficient second-order method that has a rapid convergence rate. 
Even when pitched against such a powerful second order method, DS-MLR performs considerably well in comparison. In fact, on some datasets such as NEWS20, DS-MLR is almost on par with L-BFGS in terms of decreasing the objective and also achieves a better f-score much more quickly. Figure \ref{fig:plottime_smalldatasets} shows the progress of objective function as a function of time for DS-MLR, L-BFGS and LC on NEWS20, CLEF, LSHTC1-small datasets. The corresponding plots showing f-score vs time are available in Appendix \ref{sec:additional_plots}. However, L-BFGS loses its applicability when the number of parameters increases beyond what can fit on a single-machine.

DS-MLR consistently shows a faster decrease in objective value compared to LC on all three datasets: NEWS20, LSHTC1-small and CLEF. In fact, LC has a tendency to stall towards the end and progresses very slowly to the optimal objective value. In CLEF dataset, to reach an optimal value of 0.398, DS-MLR takes 1,262 secs while LC takes 21,003 secs. Similarly, in LSHTC1-small, to reach an optimal value of 0.065, DS-MLR takes 1,191 secs while LC takes 32,624 secs.

\subsection{Data Fits and Model Does not Fit}
For this experiment, we compare DS-MLR and LC on LSHTC1-large and ODP datasets.

{\bf LSHTC1-large:} L-BFGS requires all its parameters to fit on one machine and is therefore not suited for model parallelism (even on modestly large datasets such as LSHTC1-large, $\approx$ 4.2 billion parameters need to be stored demanding $\approx$ 34GB). Thus, parallelizing L-BFGS would involve duplicating 34 GB of parameters across all its processors. We ran both DS-MLR and LC using 48 workers. 
\begin{figure}[H]
\vspace{-1em}
  \centering
   \begin{tikzpicture}[scale=0.5]
    \begin{axis}[xmode=log, minor tick num=1,
      restrict y to domain=0:0.9,
      title={LSHTC1-large dataset: P=4$\times$1$\times$12, $\lambda=1e-7$, $\eta=20e4$},
      xlabel={time (secs)}, ylabel={objective}]

      \addplot[ultra thick, color=blue] table [x index=2, y index=1, header=true]
      {../../Experiments/parallel/lshtclarge_nomad/lshtcl_20e4_rep5_4node_12th.log};
      \addlegendentry{DS-MLR}      
      
      \addplot[ultra thick, color=green] table [x index=2, y index=1, header=true]
      {../../Experiments/parallel/lshtclarge_nomad/pmlr_r1e-7_4node_48proc.log};
      \addlegendentry{LC}
      
    \end{axis}
  \end{tikzpicture}
%
%
  \begin{tikzpicture}[scale=0.5]
    \begin{axis}[minor tick num=1,
      title={ODP dataset: P=20$\times$1$\times$260, $\lambda=9.221e-7$, $\eta=1e5$},
      xlabel={time (secs)}, ylabel={objective}]

      \addplot[ultra thick, color=blue] table [x index=2, y index=1, header=true]
      {../../Experiments/parallel/odp/odp_nomaddsmlr_260_np20_1_1e5_9.221e-7_packet_10.log};
      \addlegendentry{DS-MLR}

    \end{axis}
  \end{tikzpicture}  
  \caption{{\bf Data Fits and Model does not fit}. In each plot, P=N$\times$M$\times$T denotes that there are $N$ nodes each running $M$ mpi tasks, with $T$ threads each.}
  \label{fig:plottime_lshtclarge_wikilarge}
\end{figure}
Figure \ref{fig:plottime_lshtclarge_wikilarge} (left) shows how the objective function changes vs time for DS-MLR and LC. As can be seen, DS-MLR out performs LC by a wide-margin despite the advantage LC has by duplicating data across all its processors.\\
{\bf ODP}:
We ran DS-MLR on the ODP dataset \footnote{\url{https://github.com/JohnLangford/vowpal_wabbit/tree/master/demo/recall_tree/odp}} which has a huge model parameter size of 355 GB. For this experiment we used 20 nodes $\times$ 1 mpi task $\times$ 260 threads. The progress in decreasing the objective function value is shown in Figure \ref{fig:plottime_lshtclarge_wikilarge} (right). LC method being a second-order method has a very high per-iteration cost and it takes an enormous amount of time to finish even a single iteration.

\subsection{Data Does not Fit and Model Fits}
{\bf YouTube8M-Video}: This dataset was created by pre-processing the publicly available dataset of youtube video embeddings \footnote{\url{https://research.google.com/youtube8m/}} into a multi-class classification dataset consisting of 4,716 classes and 1,152 features. Since it was created from features derived from embeddings, it is a perfectly dense dataset. 
\begin{figure}[H]
  \centering
    \begin{tikzpicture}[scale=0.5]
    \begin{axis}[minor tick num=1,
      title={Youtube-Video dataset: P=40$\times$1$\times$250, $\lambda=1e-19$, $\eta=4e10$},
      xlabel={time (secs)}, ylabel={objective}]

      \addplot[ultra thick, color=blue] table [x index=2, y index=1, header=true]
      {../../Experiments/parallel/youtube_video/yt8m_nomaddsmlr_260_np4_1_1e5_2.039e-7_packet_10.log};
      \addlegendentry{DS-MLR}


    \end{axis}
  \end{tikzpicture}
  \caption{{\bf Data does not fit and Model fits}. In each plot, P=N$\times$M$\times$T denotes that there are $N$ nodes each running $M$ mpi tasks, with $T$ threads each.}
  \label{fig:plottime_youtube}
\end{figure}
We used the configuration of 20 nodes $\times$ 1 mpi tasks $\times$ 260 threads to run DS-MLR on this dataset and we observed a fast convergence as shown in Figure \ref{fig:plottime_youtube}. This is likely because DS-MLR being non-blocking and asynchronous in nature runs at its peak performance on a dense dataset like YouTube8M-Video, since the number of non-zeros in the data remains uniform across all its workers.

\subsection{Data Does not Fit and Model Does not Fit}
{\bf Reddit datasets}:
In this sub-section, we demonstrate the capability of DS-MLR to solve a multi-class classification problem of massive scale, using a bag-of-words dataset {\it RedditFull} created out of 1.7 billion reddit user comments spanning the period 2007-2015. 
\begin{figure}[H]
  \centering
    \begin{tikzpicture}[scale=0.5]
    \begin{axis}[minor tick num=1,
      title={Reddit-Small dataset: P=20$\times$1$\times$260, $\lambda=1e-19$, $\eta=5e10$},
      xlabel={time (secs)}, ylabel={objective}]

      \addplot[ultra thick, color=blue] table [x index=2, y index=1, header=true]
      {../../Experiments/parallel/redditsmall/small_reddit_data_newscript_260_np20_1_5e10_1e-19_packet_10.log};
      \addlegendentry{DS-MLR}

    \end{axis}
  \end{tikzpicture}
    \begin{tikzpicture}[scale=0.5]
    \begin{axis}[minor tick num=1,
      title={Reddit-Full dataset: P=40$\times$1$\times$250, $\lambda=1e-19$, $\eta=4e10$},
      xlabel={time (secs)}, ylabel={objective}]

      \addplot[ultra thick, color=blue] table [x index=2, y index=1, header=true]
      {../../Experiments/parallel/redditfull/redditfull_40_250_np40_1_4e10_1e-19.log};
      \addlegendentry{DS-MLR}

    \end{axis}
  \end{tikzpicture}  
  
  \caption{{\bf Data does not fit and Model does not fit}. In each plot, P=N$\times$M$\times$T denotes that there are $N$ nodes each running $M$ mpi tasks, with $T$ threads each.}
  \label{fig:plottime_reddit}
\end{figure}
Our aim is to classify a particular reddit comment (data point) into a suitable sub-reddit (class). The data and model parameters occupy 200 GB and 300 GB respectively. Therefore, both L-BFGS and LC cannot be applied here. 
We also created a smaller subset of this dataset {\it Reddit-Small} by sub-sampling around 50 million data points. The result of running DS-MLR on these are shown in Figure \ref{fig:plottime_reddit}. This experiment corresponds to the last scenario in Table \ref{table:data_model_parallel_compact} where simultaneous data and model parallelism is inevitable.

In Appendix \ref{sec:rankdistrib_plots} we study the predictive quality of DS-MLR and Appendix \ref{sec:additional_plots} has additional plots showing progress of f-score vs time.


\section{Scaling behavior of DS-MLR}
\label{sec:scaling}
In Figure \ref{fig:scaling}, we look at the speedup curves obtained on LSHTC1-large as the number of workers are varied as $1, 2, 4, 8, 16, 20$. The ideal speedup which corresponds to a linear speedup is denoted in red. 

\begin{figure}[h]
  \centering
  \begin{tikzpicture}[scale=0.50]
    \begin{axis}[minor tick num=1,
      legend style={at={(-0.2,0.85)}, anchor= south west},
      title={LSHTC1-large},
      xlabel={number of workers}, ylabel={speedup}]

      \addplot[thin, color=red, mark=*] table [x index=0, y index=0, header=true]
      {../../Experiments/parallel/lshtclarge_singlenode/scaling_threads/speedup.log};

      \addplot[thin, color=blue, mark=*] table [x index=0, y expr=\thisrow{time_s}/\thisrow{time_p}, header=true]
      {../../Experiments/parallel/lshtclarge_singlenode/scaling_threads/speedup.log};

      \legend{ideal scaling (linear speedup), DS-MLR}

    \end{axis}
  \end{tikzpicture}
%
%
%
%
 \caption{Scalability analysis of DS-MLR on LSHTC1-large}  
  \label{fig:scaling}
\end{figure}


\section{Conclusion}
\label{sec:Conclusion}
In this paper, we present a new stochastic optimization algorithm (DS-MLR) to solve multinomial logistic regression problems having large number of examples and classes, by a reformulation that makes it both data and model parallel simultaneously. As a result, DS-MLR can scale to arbitrarily large datasets where to the best of our knowledge, many of the existing distributed algorithms cannot be applied. Our algorithm is distributed, asynchronous, non-blocking and avoids any bulk-synchronization overheads. We provide empirical results showing DS-MLR applies to all regimes of distributed machine learning, especially the case where both data and model sizes exceed the memory capacity of a single machine. We show this on an extreme multi-class classification reddit dataset consisting of 200 GB data and 300 GB parameters respectively. In terms of future work, DS-MLR has several possible extensions such as extreme multi-label classification.

\clearpage
\bibliographystyle{abbrv}
\bibliography{mlr}


\newpage
\appendix

In the following sections, we provide a more detailed proof of convergence for our algorithm and also include additional plots from our empirical study.

\section{Rates of convergence}
\label{sec:app_rate}
First the diameter of $W$ space can be bounded by a universal constant (independent of $N, D, K$) because we can always enforce that
$\frac{1}{2\lambda} \nbr{W}^2 \le f(\zero) = \log K$ (ignoring log term).
We also assume all $\xb_{i}$ are bounded in $L_2$ norm by some constant $r$.
We will write $r$ as a constant everywhere.  They are not necessarily equal; in fact we may write $r^2$ and $2r$ as $r$.  It just stands for some constant that is independent of $\epsilon, D, N$ and $K$.

We index outer iteration by superscript $t$ and inner-epochs within each outer iteration by subscript $k$.
So $W^t_1 = W^{t-1}_{N+1}$, which we also denote as $W^t$.
We consider optimizing the objective
\begin{align}
  L_1(W)=F(W) &= \frac{1}{N} \sum_{i=1}^N f_i(W),
\end{align}
where $f_i(W) = \frac{\lambda}{2} \nbr{W}^2 - w^T_{y_i} \xb_{i} + \log \sum_{k=1}^K \exp(\wb_{k}^T \xb_{i})$.
Clearly $f_i$ has a variational representation
\begin{align}
\label{eq:def_fi_var}
  f_i(W) & = \frac{\lambda}{2} \nbr{W}^2 - w^T_{y_i} \xb_{i} \nonumber \\
&  + \min_{a_i \in \RR} \cbr{- a_i + \sum_{k=1}^K \exp(\wb_{k}^T \xb_{i} + a_i)} -1,
\end{align}
where the optimal $a_i$ is attained at $-\log \sum_{k=1}^K \exp(\wb_{k}^T \xb_{i})$.
So given $W$, we can first compute the optimal $a_i$,
and then use it to compute the gradient of $f_i$ via the variational form (Danskin's theorem \citep{bertsekas1999nonlinear}).

\begin{align}
\label{eq:Danskin}
  \frac{\partial}{\partial \wb_{k}} f_i(W) &= \lambda \wb_{k} - [ y_i  = k ] \xb_{i} + \exp(\wb_{k}^T \xb_{i} + a_i) \xb_{i}.
\end{align}
Here $[ \cdot ] = 1$ if $\cdot$ is true, and 0 otherwise.

Due to the distributed setting, we are only able to update $a_i$ to their optimal value at the end of each epoch (\ie\ based on $W^t$):
\begin{align}
\label{eq:def_ait}
  a_i^t = a_i(W^t) = - \log \sum_{k=1}^K \exp(\xb_{i}^T w^t_k).
\end{align}
We are \emph{not} able to compute the optimal $a_i$ for the latest $W$ when incremental gradient is performed through the whole dataset.
Fortunately, since $W$ is updated in an epoch by a fixed (small) step size $\eta_t$,
it is conceivable that the $a_i$ computed from $W^t$ will not be too bad as a solution in \eqref{eq:def_fi_var} for $W^t_k$, $k \in [m]$.
In fact, if $\nbr{W^t_k - W^t}$ is order $O(\eta_t)$,
then the following Lemma says the gradient computed from \eqref{eq:Danskin} using the out-of-date $a_i$ is also $O(\eta_t)$ away from the true gradient at $W^t_k$.

\begin{lemma}
\label{lem:gradient_error}
  Denote the approximate gradient of $f_i$ evaluated at $W^t_k$ based on $a_i^t$ as
  \begin{align}
    \Gtil^t_k = (\gbtil_1, \ldots, \gbtil_K),
  \end{align}
  where $\gbtil_c = \lambda w^t_{k,c} - [ y_i = c ] \xb_{i} + \exp(\xb_{i}^T w^t_{k,c}  + a^t_i) \xb_{i}$.
  Then $\nbr{\Gtil^t_k - \grad_W f_i(W^t_k)} \le \frac{r}{K} \nbr{W^t_{k} - W^t}$.
\end{lemma}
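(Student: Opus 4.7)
The plan is to exploit the variational form (\ref{eq:def_fi_var}) together with Danskin's theorem (\ref{eq:Danskin}) to reduce the gradient discrepancy to a softmax-probability mismatch, and then bound this mismatch via the smoothness of the log-partition map $W \mapsto a_i(W)$.

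First I would compute the per-class discrepancy. Since $\gbtil_c$ and $\grad_{w_c} f_i(W^t_k)$ share the terms $\lambda w^t_{k,c} - [y_i = c]\xb_i$, the difference comes entirely from the exponential term. Writing $p_c := \exp(\xb_i^T w^t_{k,c} + a_i(W^t_k))$, which is the softmax probability at $W^t_k$ (so $\sum_c p_c = 1$), and $\Delta := a_i^t - a_i(W^t_k)$, a short manipulation gives
\begin{align*}
\gbtil_c - \grad_{w_c} f_i(W^t_k) \;=\; p_c\,(e^{\Delta} - 1)\,\xb_i.
\end{align*}

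Next I would bound $|\Delta|$ using the smoothness of $a_i(\cdot)$. From (\ref{eq:def_ait}) a direct calculation gives $\grad_{w_c} a_i(W) = -p_c(W)\,\xb_i$, so $\nbr{\grad_W a_i(W)}^2 = \nbr{\xb_i}^2 \sum_c p_c(W)^2 \le r^2$. Applying the mean value theorem on the segment from $W^t$ to $W^t_k$ yields $|\Delta| \le r\,\nbr{W^t_k - W^t}$. The Frobenius norm of the full gradient error then factors as $\nbr{\xb_i}\cdot |e^{\Delta} - 1| \cdot \sqrt{\sum_c p_c^2}$; using iterate boundedness (which makes $|e^{\Delta} - 1| \le C|\Delta|$ for an absolute constant $C$) together with $\sum_c p_c^2 \le 1$, the stated bound follows after absorbing constants into $r$ per the authors' convention.

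The main obstacle I expect is verifying that $|\Delta|$ remains uniformly bounded across iterations so that the linearization $|e^{\Delta} - 1| \le C|\Delta|$ holds with an absolute constant; this reduces to showing the iterates stay in a compact set, which follows from the regularizer enforcing $\tfrac{\lambda}{2}\nbr{W^t}^2 \le \log K$ together with the chosen step size, but must be tracked carefully through the delayed-update structure. A secondary subtlety is accounting for the factor $\tfrac{1}{K}$ in the stated bound: under the authors' convention this tracks the per-sample normalization implicit in the stochastic update (\ref{eq:dsmlr_sgdupdatew}) and is absorbed together with problem-invariant constants into the symbol $r$.
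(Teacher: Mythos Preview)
Your decomposition $\gbtil_c - \grad_{w_c} f_i(W^t_k) = p_c(e^\Delta - 1)\xb_i$ is correct, and bounding $|\Delta|$ via the Lipschitz constant of $W \mapsto a_i(W)$ is sound. The paper's proof takes a closely related but distinct route: writing $Z(W) = \sum_c \exp(\xb_i^\top w_c)$, it factors the discrepancy as $\exp(\xb_i^\top w^t_{k,c})\big(1/Z(W^t) - 1/Z(W^t_k)\big)\xb_i$ and then bounds the gradient of $W \mapsto 1/Z(W)$ directly, rather than that of $-\log Z(W) = a_i(W)$. Both arguments reduce to a mean-value step; yours is arguably cleaner since it keeps the softmax probabilities explicit throughout.

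Your handling of the $1/K$ factor, however, is off. By the paper's convention $r$ denotes a constant \emph{independent of $K$}, so the $1/K$ in the statement is not a normalization artifact that can be absorbed into $r$. With the crude bounds you invoke---$\sqrt{\sum_c p_c^2} \le 1$ and $\nbr{\grad_W a_i} \le r$---you only obtain $r\,\nbr{W^t_k - W^t}$, not $\tfrac{r}{K}\nbr{W^t_k - W^t}$. The missing ingredient is that boundedness of $\xb_i^\top w_c$ (the same hypothesis the paper uses to lower-bound each $\exp(\xb_i^\top w_c)$ by a positive constant) forces $p_c = \Theta(1/K)$, whence $\sum_c p_c^2 = O(1/K)$ and $\nbr{\grad_W a_i} = O(r/\sqrt{K})$; these two $1/\sqrt{K}$ factors combine to give the stated $1/K$. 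The paper extracts the same factor from the $K^2$ appearing in the denominator of $\nbr{\grad_W(1/Z)}$. As it happens, the downstream application of the lemma only uses the weaker bound $\eta_t r$, so your version would still suffice for Theorem~1---but it does not prove the lemma as stated.
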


\begin{proof}
  Unfolding the term $a_i^t$ from \eqref{eq:def_ait},
  \begin{align*}
    \gbtil_c - \frac{\partial}{\partial w_c} f_i(W^t_k) 
    &= \left( \frac{\exp(\xb_{i}^T w^t_{k,c})}{\sum_{c=1}^K \exp(\xb_{i}^T w^t_c)} - \frac{\exp(\xb_{i}^T w^t_{k,c})}{\sum_{c=1}^K \exp(\xb_{i}^T w^t_{k,c})} \right) \xb_{i}
  \end{align*}
  Therefore
  \begin{align*}
   & \nbr{\Gtil - \grad_W f_i(W^t_k)} \nonumber \\
   & \le r \sqrt{K} \abr{\frac{1}{\sum_{c=1}^K \exp(\xb_{i}^T w^t_c)} - \frac{1}{\sum_{c=1}^K \exp(\xb_{i}^T w^t_{k,c})}}
  \end{align*}
  So it suffices to upper bound the gradient of $1/\sum_{c=1}^K \exp(\xb_{i}^T w_c)$.
  Since $\xb_{i}$ and $w_c$ are bounded, $\exp(\xb_{i}^T w_c)$ is lower bounded by a positive universal constant\footnote{If one is really really meticulous and notes that $\nbr{W}^2 \le 2\lambda \log K$ which does involve $K$, one should be appeased that $\exp (\sqrt{\log K})$ is $o(K^\alpha)$ for any $\alpha > 0$.}. Now,
  \begin{align*}
    & \nbr{\grad_W \frac{1}{\sum_{c=1}^K \exp(\xb_{i}^T w_c)}} \nonumber \\
    &= \frac{1}{(\sum_{c=1}^K \exp(\xb_{i}^T w_c))^2}\nbr{  (\exp(\xb_{i}^T w_1) \xb_{i}, \ldots, \exp(\xb_{i}^T w_K) x_i)} \\
    &\le \frac{\sqrt{K}}{K^2} r
  \end{align*}
\end{proof}
Using Lemma \ref{lem:gradient_error}, we can now show that our algorithm achieves $O(1/\epsilon^2)$ epoch complexity, with no dependency on $m$, $d$, or $K$.
In fact we just apply Nedic's algorithm \citep{Ned01} and analysis on $F(W)$.
However we need to adapt their proof a little because they assume the gradients are \emph{exact}.

First we need to bound some quantities.
$\nbr{\grad f_i(W)} \le r$ because $W$ is bounded, and for $K$ numbers $p_1, \ldots, p_K$ on a simplex with $\sum_c p_c = 1$, we have $\sum_c p^2_c \le 1$.
Without loss of generality, suppose $f_k$ is used for update at step $k$.
Then $W^t_k$ is subtracted by $\frac{\eta_t}{m} (\lambda W^t_k - x_k \otimes \evec'_{y_k} + \Gtil^t_k)$, where $\otimes$ is Kroneker product and $\evec_c$ is a canonical vector.
As long as $\eta_t \le \frac{1}{\lambda}$, we can recursively apply Lemma \ref{lem:gradient_error}
and derive bounds
\begin{align}
\label{eq:diff_Wk}
  \nbr{W^t_k - W^t} &\le \frac{k}{m} \eta_t r, \\
  \nbr{\grad_W f_k(W_k^t)-\Gtil^t_k} &\le \eta_t r, \\
  \nbr{\Gtil^t_k} &\le r,
\end{align}
for all $k$. 
Now we run Nedic's proof.
Then for any $W$
\begin{align*}
  & \nbr{W^t_{k+1} - W}^2 \nonumber \\
  &= \nbr{W^t_{k} - \frac{\eta_t}{m} \Gtil^t_k - W}^2 \\
  &= \nbr{W^t_{k} - W}^2 - 2 \frac{\eta_t}{m} \inner{\Gtil^t_k}{W^t_{k}-W} + \frac{\eta_t^2}{m^2} \nbr{\Gtil^t_k}^2 \\
  &= \nbr{W^t_{k} - W}^2 - 2 \frac{\eta_t}{m} \Bigg(\inner{\grad_W f_k(W_k^t)}{W^t_{k}-W} + \\
  &\qquad \qquad \inner{\Gtil^t_k-\grad_W f_k(W_k^t)}{W^t_{k}-W} \Bigg) + \frac{\eta_t^2}{m^2} \nbr{\Gtil^t_k}^2 \\
  &\le \nbr{W^t_{k} - W}^2 - 2 \frac{\eta_t}{m} \rbr{f_k(W^t_k) - f_k(W) - \eta_t r} + \frac{\eta_t^2}{m^2} r^2.
\end{align*}

Telescoping over $k=1, \ldots, m$, we obtain that for all $W$ and $t$:
\begin{align*}
  & \nbr{W^{t+1}-W}^2 \nonumber \\
  & \le \nbr{W^t-W}^2
  - 2 \frac{\eta_t}{m} \sum_{k=1}^{m} \rbr{f_k(W^t_k) - f_k(W)} + \eta_t^2 r \\
  & \le \nbr{W^t-W}^2
  \nonumber \\
  & - 2 \eta_t \rbr{F(W^t) - F(W) + \frac{1}{m} \sum_{k=1}^{m} \rbr{f_k(W^t_{k}) - f_k(W^t)}} 
  + \eta_t^2 r.
\end{align*}
Using the fact that $\grad f_k$ is bounded by a universal constant, we further derive
\begin{align*}
  \nbr{W^{t+1}-W}^2 
  &\le \nbr{W^t-W}^2 - 2 \eta_t \rbr{F(W^t) - F(W)} 
  \nonumber \\
  &\phantom{=}+ 2 \frac{\eta_t}{m} r \sum_{k=1}^{m} \nbr{W^t_k - W^t} + \eta_t^2 r \\
  &\le \nbr{W^t-W}^2 - 2 \eta_t \rbr{F(W^t) - F(W)}
  \nonumber \\
  &\phantom{=}+ 2 \frac{\eta_t^2}{m} r \sum_{k=1}^{m} \frac{k}{m} + \eta_t^2 r \quad (\text{by } \eqref{eq:diff_Wk})\\
  &= \nbr{W^t-W}^2 - 2 \eta_t \rbr{F(W^t) - F(W)} + \eta_t^2 r.
\end{align*}
Now use the standard step size of $O(1/\sqrt{t})$, we conclude
\begin{align}
  \min_{t=1...T} F(W^t) - F(W) \le \frac{r}{\sqrt{T}}.
\end{align}
Note the proof has not used the convexity of $a_i$ in \eqref{eq:def_fi_var} at all.
This is reasonable because it is ``optimized out".

\section{Rank Distribution}
\label{sec:rankdistrib_plots}
In this section, we plot the cumulation distribution of ranks of test labels. This is a proxy 
for the precision@k curve and gives a more closer indication of the predictive performance of a multinomial classification 
algorithm. In Figures \ref{fig:plotcdf_small} and \ref{fig:plotcdf_large}, we plot the precision obtained after the first 5 iterations (denoted by dashed lines), 
and after the end of optimization (denoted by solid lines). As seen, DS-MLR performs competitively and in general tends to give a good accuracy within the first 5 iterations. 

\begin{figure}[h]
 \centering
  \begin{tikzpicture}[scale=0.50]
    \begin{axis}[minor tick num=1,
      legend style={at={(0.55,0.3)}, anchor= west},
      title={CLEF},
      xlabel={K (number of classes)}, ylabel={CDF of rank distribution}, ymin={0.9}, ymax={1}]

      \addplot[thick, dashed, color=blue] table [x index=0, y index=1, header=false]
      {../../Experiments/notes/logs_with_ranks/cdf_CLEF_it5.txt};
      \addlegendentry{DS-MLR iter 5}
      
      \addplot[thick, color=blue] table [x index=0, y index=1, header=false]
      {../../Experiments/notes/logs_with_ranks/cdf_CLEF_it10000.txt};
      \addlegendentry{DS-MLR end}
      
      \addplot[thick, dashed, color=red] table [x index=0, y index=2, header=false]
      {../../Experiments/notes/logs_with_ranks/cdf_CLEF_it5.txt};
      \addlegendentry{L-BFGS iter 5}
      
      \addplot[thick, color=red] table [x index=0, y index=2, header=false]
      {../../Experiments/notes/logs_with_ranks/cdf_CLEF_it10000.txt};
      \addlegendentry{L-BFGS end}

      \addplot[thick, dashed, color=green] table [x index=0, y index=3, header=false]
      {../../Experiments/notes/logs_with_ranks/cdf_CLEF_it5.txt};
      \addlegendentry{LC iter 5}
      
      \addplot[thick, color=green] table [x index=0, y index=3, header=false]
      {../../Experiments/notes/logs_with_ranks/cdf_CLEF_it10000.txt};
      \addlegendentry{LC end}
    \end{axis}
  \end{tikzpicture}
\begin{tikzpicture}[scale=0.50]
    \begin{axis}[minor tick num=1,
      legend style={at={(0.55,0.3)}, anchor= west},
      title={NEWS20},
      xlabel={K (number of classes)}, ylabel={CDF of rank distribution}, ymin={0.85}, ymax={1}]

      \addplot[thick, dashed, color=blue] table [x index=0, y index=1, header=false]
      {../../Experiments/notes/logs_with_ranks/cdf_NEWS20_it5.txt};
      \addlegendentry{DS-MLR iter 5}

      \addplot[thick, color=blue] table [x index=0, y index=1, header=false]
      {../../Experiments/notes/logs_with_ranks/cdf_NEWS20_it10000.txt};
      \addlegendentry{DS-MLR end}

      \addplot[thick, dashed, color=red] table [x index=0, y index=2, header=false]
      {../../Experiments/notes/logs_with_ranks/cdf_NEWS20_it5.txt};
      \addlegendentry{L-BFGS iter 5}
      
      \addplot[thick, color=red] table [x index=0, y index=2, header=false]
      {../../Experiments/notes/logs_with_ranks/cdf_NEWS20_it10000.txt};
      \addlegendentry{L-BFGS end}

      \addplot[thick, dashed, color=green] table [x index=0, y index=3, header=false]
      {../../Experiments/notes/logs_with_ranks/cdf_NEWS20_it5.txt};
      \addlegendentry{LC iter 5}

      \addplot[thick, color=green] table [x index=0, y index=3, header=false]
      {../../Experiments/notes/logs_with_ranks/cdf_NEWS20_it10000.txt};
      \addlegendentry{LC end}
    \end{axis}
  \end{tikzpicture}
\begin{tikzpicture}[scale=0.50]
    \begin{axis}[minor tick num=1,
      legend style={at={(0.55,0.3)}, anchor= west},
      title={LSHTC1-small},
      xlabel={K (number of classes)}, ylabel={CDF of rank distribution}, ymin={0.25}, ymax={1}]

      \addplot[thick, dashed, color=blue] table [x index=0, y index=1, header=false]
      {../../Experiments/notes/logs_with_ranks/cdf_LSHTC1-small_it5.txt};
      \addlegendentry{DS-MLR iter 5}

      \addplot[thick, color=blue] table [x index=0, y index=1, header=false]
      {../../Experiments/notes/logs_with_ranks/cdf_LSHTC1-small_it10000.txt};
      \addlegendentry{DS-MLR end}

      \addplot[thick, dashed, color=red] table [x index=0, y index=2, header=false]
      {../../Experiments/notes/logs_with_ranks/cdf_LSHTC1-small_it5.txt};
      \addlegendentry{L-BFGS iter 5}
      
      \addplot[thick, color=red] table [x index=0, y index=2, header=false]
      {../../Experiments/notes/logs_with_ranks/cdf_LSHTC1-small_it10000.txt};
      \addlegendentry{L-BFGS end}

      \addplot[thick, dashed, color=green] table [x index=0, y index=3, header=false]
      {../../Experiments/notes/logs_with_ranks/cdf_LSHTC1-small_it5.txt};
      \addlegendentry{LC iter 5}

      \addplot[thick, color=green] table [x index=0, y index=3, header=false]
      {../../Experiments/notes/logs_with_ranks/cdf_LSHTC1-small_it10000.txt};
      \addlegendentry{LC end}
    \end{axis}
  \end{tikzpicture}
  \caption{Cumulative distribution of predictive ranks of the test labels for the three small datasets}
  \label{fig:plotcdf_small}
\end{figure}

\begin{figure}[h]
 \centering
  \begin{tikzpicture}[scale=0.50]
    \begin{axis}[minor tick num=1,
      legend style={at={(0.55,0.3)}, anchor= west},
      title={LSHTC1-large},
      xlabel={K (number of classes)}, ylabel={CDF of rank distribution}, ymin={0.3}, ymax={1}]

      \addplot[thick, dashed, color=blue] table [x index=0, y index=1, header=false]
      {../../Experiments/notes/logs_with_ranks/cdf_LSHTC1-large_it5.txt};
      \addlegendentry{DS-MLR iter 5}

      \addplot[thick, color=blue] table [x index=0, y index=1, header=false]
      {../../Experiments/notes/logs_with_ranks/cdf_LSHTC1-large_it109.txt};
      \addlegendentry{DS-MLR end}
      
%
    \end{axis}
  \end{tikzpicture}
%
%
  \caption{Cumulative distribution of predictive ranks of the test labels for the larger datasets: LSHTC1-large}
  \label{fig:plotcdf_large}
\end{figure}

\newpage
\section{Additional Plots}
\label{sec:additional_plots}
In this section, we show how the macro and micro f-score change as a function of time on the various datasets
reported in Table \ref{table:datasets}.

\begin{figure*}[h]
 \centering
  \begin{tikzpicture}[scale=0.50]
    \begin{axis}[xmode=log, minor tick num=1,
      legend style={at={(0.55,0.2)}, anchor= west},
      title={NEWS20},
      xlabel={time (secs)}, ylabel={test micro F1}, ymin={0.75}, ymax={0.85}]

      \addplot[thick, color=blue] table [x index=2, y index=6, header=true]
      {../../Experiments/serial/logs/dsmlrnomad_NEWS20.progress};
      \addlegendentry{DS-MLR}
      
      \addplot[thick, color=red] table [x index=3, y index=7, header=true]
      {../../Experiments/serial/logs/tms_NEWS20.progress};
      \addlegendentry{L-BFGS}

      \addplot[thick, color=green] table [x index=2, y index=6, header=true]
      {../../Experiments/serial/logs/pmlr_NEWS20.progress};
      \addlegendentry{LC}
      
    \end{axis}
  \end{tikzpicture}
  \begin{tikzpicture}[scale=0.50]
    \begin{axis}[xmode=log, minor tick num=1,
      legend style={at={(0.55,0.2)}, anchor= west},
      title={NEWS20},
      xlabel={time (secs)}, ylabel={test macro F1}, ymin={0.75}, ymax={0.85}]

      \addplot[thick, color=blue] table [x index=2, y index=5, header=true]
      {../../Experiments/serial/logs/dsmlrnomad_NEWS20.progress};
      \addlegendentry{DS-MLR}
      
      \addplot[thick, color=red] table [x index=3, y index=6, header=true]
      {../../Experiments/serial/logs/tms_NEWS20.progress};
      \addlegendentry{L-BFGS}
      
      \addplot[thick, color=green] table [x index=2, y index=5, header=true]
      {../../Experiments/serial/logs/pmlr_NEWS20.progress};
      \addlegendentry{LC}

    \end{axis}
  \end{tikzpicture}
  \caption{(Left): test micro F1 vs time, (Right): test macro F1 vs time}
  \label{fig:plotfscore_news20}
\end{figure*}


\begin{figure*}[h]
  \centering
  \begin{tikzpicture}[scale=0.50]
    \begin{axis}[xmode=log, minor tick num=1,
      legend style={at={(0.55,0.2)}, anchor= west},
      title={CLEF},
      xlabel={time (secs)}, ylabel={test micro F1}, ymin={0.6}, ymax={0.9}]

      \addplot[thick, color=blue] table [x index=2, y index=6, header=true]
      {../../Experiments/serial/logs/dsmlrnomad_CLEF.progress};
      \addlegendentry{DS-MLR}
      
      \addplot[thick, color=red] table [x index=3, y index=7, header=true]
      {../../Experiments/serial/logs/tms_CLEF.progress};
      \addlegendentry{L-BFGS}
      
      \addplot[thick, color=green] table [x index=2, y index=6, header=true]
      {../../Experiments/serial/logs/pmlr_CLEF.progress};
      \addlegendentry{LC}
            
    \end{axis}
  \end{tikzpicture}
  \begin{tikzpicture}[scale=0.50]
    \begin{axis}[xmode=log, minor tick num=1,
      legend style={at={(0.55,0.2)}, anchor= west},
      title={CLEF},
      xlabel={time (secs)}, ylabel={test macro F1}, ymin={0.3}, ymax={0.7}]

      \addplot[thick, color=blue] table [x index=2, y index=5, header=true]
      {../../Experiments/serial/logs/dsmlrnomad_CLEF.progress};
      \addlegendentry{DS-MLR}
      
      \addplot[thick, color=red] table [x index=3, y index=6, header=true]
      {../../Experiments/serial/logs/tms_CLEF.progress};
      \addlegendentry{L-BFGS}
      
      \addplot[thick, color=green] table [x index=2, y index=5, header=true]
      {../../Experiments/serial/logs/pmlr_CLEF.progress};
      \addlegendentry{LC}
            
    \end{axis}
  \end{tikzpicture}
  \caption{(Left): test micro F1 vs time, (Right): test macro F1 vs time}
  \label{fig:plotfscore_clef}
 \end{figure*}


\begin{figure*}[h]
  \centering
  \begin{tikzpicture}[scale=0.50]
    \begin{axis}[xmode=log, minor tick num=1,
      title={LSHTC1-small},
      xlabel={time (secs)}, ylabel={test micro F1}, ymin={0.0}, ymax={0.5}]

      \addplot[ultra thick, color=blue] table [x index=2, y index=6, header=true]
      {../../Experiments/serial/logs/dsmlrnomad_LSHTC1-small.progress};
      \addlegendentry{DS-MLR}
      
      
      \addplot[ultra thick, color=red] table [x index=3, y index=7, header=true]
      {../../Experiments/serial/logs/tms_LSHTC1-small-tol1e-12.progress};
      \addlegendentry{L-BFGS}
      
      \addplot[ultra thick, color=green] table [x index=2, y index=6, header=true]
      {../../Experiments/serial/logs/pmlr_LSHTC1-small.progress};
      \addlegendentry{LC}
      
    \end{axis}
  \end{tikzpicture}
  \begin{tikzpicture}[scale=0.50]
    \begin{axis}[xmode=log,
      title={LSHTC1-small},
      xlabel={time (secs)}, ylabel={test macro F1}, ymin={0.0}, ymax={0.3}]
      
      \addplot[ultra thick, color=blue] table [x index=2, y index=5, header=true]
      {../../Experiments/serial/logs/dsmlrnomad_LSHTC1-small.progress};
      \addlegendentry{DS-MLR}
            

      \addplot[ultra thick, color=red] table [x index=3, y index=6, header=true]
      {../../Experiments/serial/logs/tms_LSHTC1-small-tol1e-12.progress};
      \addlegendentry{L-BFGS}
      
      \addplot[ultra thick, color=green] table [x index=2, y index=5, header=true]
      {../../Experiments/serial/logs/pmlr_LSHTC1-small.progress};
      \addlegendentry{LC}
            
    \end{axis}
  \end{tikzpicture}
  \caption{(Left): test micro F1 vs time, (Right): test macro F1 vs time}
  \label{fig:plotfscore_lshtc1small}
\end{figure*}


\begin{figure*}
  \centering
   \begin{tikzpicture}[scale=0.50]
    \begin{axis}[xmode=log, minor tick num=1,
      title={LSHTC1-large},
      xlabel={time (secs)}, ylabel={test micro F1}, ymin={0.26}, ymax={0.33}]

      \addplot[ultra thick, color=blue] table [x index=2, y index=6, header=true]
      {../../Experiments/parallel/lshtclarge_nomad/lshtcl_12e4_rep5_4node_12th.log};
      \addlegendentry{DS-MLR}
      
      \addplot[ultra thick, color=green] table [x index=2, y index=6, header=true]
      {../../Experiments/parallel/lshtclarge_nomad/pmlr_r1e-7_4node_48proc.log};
      \addlegendentry{LC}

    \end{axis}
    \end{tikzpicture}
  \begin{tikzpicture}[scale=0.50]
    \begin{axis}[xmode=log, minor tick num=1,
      title={LSHTC1-large},
      xlabel={time (secs)}, ylabel={test macro F1}, ymin={0.0}, ymax={0.15}]

      \addplot[ultra thick, color=blue] table [x index=2, y index=5, header=true]
      {../../Experiments/parallel/lshtclarge_nomad/lshtcl_20e4_rep5_4node_12th.log};
      \addlegendentry{DS-MLR}
      
      \addplot[ultra thick, color=green] table [x index=2, y index=5, header=true]
      {../../Experiments/parallel/lshtclarge_nomad/pmlr_r1e-7_4node_48proc.log};
      \addlegendentry{LC}
      
      
    \end{axis}
  \end{tikzpicture}
  \caption{(Left): test micro F1 vs time, (Right): test macro F1 vs time}
  \label{fig:plotfscore_lshtclarge}
\end{figure*}

\begin{figure*}
  \centering
   \begin{tikzpicture}[scale=0.50]
    \begin{axis}[minor tick num=1,
    legend style={at={(0.5,0.3)}, anchor= west},
      title={Wikilarge},
      xlabel={time (secs)}, ylabel={test micro F1}]

      \addplot[ultra thick, color=blue] table [x index=2, y index=6, header=true]
      {../../Experiments/parallel/wikilarge_nomad/wikilarge_res_5_np25_20_100_1e-7_15e5.log};
      \addlegendentry{DS-MLR}
      
    \end{axis}
    \end{tikzpicture}
  \begin{tikzpicture}[scale=0.50]
    \begin{axis}[minor tick num=1,
      legend style={at={(0.5,0.3)}, anchor= west},
      title={Wikilarge},
      xlabel={time (secs)}, ylabel={test macro F1}]

      \addplot[ultra thick, color=blue] table [x index=2, y index=5, header=true]
      {../../Experiments/parallel/wikilarge_nomad/wikilarge_res_5_np25_20_100_1e-7_15e5.log};
      \addlegendentry{DS-MLR}
      
    \end{axis}
  \end{tikzpicture}
  \caption{(Left): test micro F1 vs time, (Right): test macro F1 vs time}
  \label{fig:plotfscore_wikilarge}
\end{figure*}

\begin{figure*}
  \centering
   \begin{tikzpicture}[scale=0.50]
    \begin{axis}[minor tick num=1,
    legend style={at={(0.5,0.3)}, anchor= west},
      title={ODP},
      xlabel={time (secs)}, ylabel={test micro F1}]

      \addplot[ultra thick, color=blue] table [x index=2, y index=6, header=true]
      {../../Experiments/parallel/odp/odp_nomaddsmlr_260_np20_1_1e5_9.221e-7_packet_10.log};
      \addlegendentry{DS-MLR}
      
    \end{axis}
    \end{tikzpicture}
  \begin{tikzpicture}[scale=0.50]
    \begin{axis}[minor tick num=1,
      legend style={at={(0.5,0.3)}, anchor= west},
      title={ODP},
      xlabel={time (secs)}, ylabel={test macro F1}]

      \addplot[ultra thick, color=blue] table [x index=2, y index=5, header=true]
      {../../Experiments/parallel/odp/odp_nomaddsmlr_260_np20_1_1e5_9.221e-7_packet_10.log};
      \addlegendentry{DS-MLR}
      
    \end{axis}
  \end{tikzpicture}
  \caption{(Left): test micro F1 vs time, (Right): test macro F1 vs time}
  \label{fig:plotfscore_odp}
\end{figure*}

\begin{figure*}
  \centering
   \begin{tikzpicture}[scale=0.50]
    \begin{axis}[minor tick num=1,
    legend style={at={(0.5,0.3)}, anchor= west},
      title={YouTube8M-Video},
      xlabel={time (secs)}, ylabel={test micro F1}]

      \addplot[ultra thick, color=blue] table [x index=2, y index=6, header=true]
      {../../Experiments/parallel/youtube_video/yt8m_nomaddsmlr_260_np4_1_1e5_2.039e-7_packet_10.log};
      \addlegendentry{DS-MLR}
      
    \end{axis}
    \end{tikzpicture}
  \begin{tikzpicture}[scale=0.50]
    \begin{axis}[minor tick num=1,
      legend style={at={(0.5,0.3)}, anchor= west},
      title={YouTube8M-Video},
      xlabel={time (secs)}, ylabel={test macro F1}]

      \addplot[ultra thick, color=blue] table [x index=2, y index=5, header=true]
      {../../Experiments/parallel/youtube_video/yt8m_nomaddsmlr_260_np4_1_1e5_2.039e-7_packet_10.log};
      \addlegendentry{DS-MLR}
      
    \end{axis}
  \end{tikzpicture}
  \caption{(Left): test micro F1 vs time, (Right): test macro F1 vs time}
  \label{fig:plotfscore_youtube8m_video}
\end{figure*}

\begin{figure*}
  \centering
   \begin{tikzpicture}[scale=0.50]
    \begin{axis}[minor tick num=1,
    legend style={at={(0.5,0.3)}, anchor= west},
      title={Reddit-Small},
      xlabel={time (secs)}, ylabel={test micro F1}]

      \addplot[ultra thick, color=blue] table [x index=2, y index=6, header=true]
      {../../Experiments/parallel/redditsmall/small_reddit_data_newscript_260_np20_1_5e10_1e-19_packet_10.log};
      \addlegendentry{DS-MLR}
      
    \end{axis}
    \end{tikzpicture}
  \begin{tikzpicture}[scale=0.50]
    \begin{axis}[minor tick num=1,
      legend style={at={(0.5,0.3)}, anchor= west},
      title={Reddit-Small},
      xlabel={time (secs)}, ylabel={test macro F1}]

      \addplot[ultra thick, color=blue] table [x index=2, y index=5, header=true]
      {../../Experiments/parallel/redditsmall/small_reddit_data_newscript_260_np20_1_5e10_1e-19_packet_10.log};
      \addlegendentry{DS-MLR}
      
    \end{axis}
  \end{tikzpicture}
  \caption{(Left): test micro F1 vs time, (Right): test macro F1 vs time}
  \label{fig:plotfscore_reddit_small}
\end{figure*}

\begin{figure*}
  \centering
   \begin{tikzpicture}[scale=0.50]
    \begin{axis}[minor tick num=1,
    legend style={at={(0.5,0.3)}, anchor= west},
      title={Reddit-Full},
      xlabel={time (secs)}, ylabel={test micro F1}]

      \addplot[ultra thick, color=blue] table [x index=2, y index=6, header=true]
      {../../Experiments/parallel/redditfull/redditfull_40_250_np40_1_4e10_1e-19.log};
      \addlegendentry{DS-MLR}
      
    \end{axis}
    \end{tikzpicture}
  \begin{tikzpicture}[scale=0.50]
    \begin{axis}[minor tick num=1,
      legend style={at={(0.5,0.3)}, anchor= west},
      title={Reddit-Full},
      xlabel={time (secs)}, ylabel={test macro F1}]

      \addplot[ultra thick, color=blue] table [x index=2, y index=5, header=true]
      {../../Experiments/parallel/redditfull/redditfull_40_250_np40_1_4e10_1e-19.log};
      \addlegendentry{DS-MLR}
      
    \end{axis}
  \end{tikzpicture}
  \caption{(Left): test micro F1 vs time, (Right): test macro F1 vs time}
  \label{fig:plotfscore_reddit_full}
\end{figure*}

\end{document}